\newtheorem{assumption}{Assumption}
\newtheorem{theorem}{Theorem}
\newtheorem{claim}{Claim}
\newtheorem{lemma}{Lemma}
\DeclareMathOperator*{\argmax}{arg\,max}
\newcommand{\R}{\mathbb{R}}
\newcommand{\iverson}[1]{1\left({#1}\right)}
\DeclareMathOperator*{\E}{\mathbb{E}}
\let\P\undefined
\DeclareMathOperator*{\P}{\mathbb{P}}
\newcommand{\dotprod}[2]{#1 \cdot #2}
\newcommand{\norm}[1]{{\left\|#1\right\|}}
\renewcommand{\exp}[1]{e^{#1}}
\newcommand{\lexp}[1]{{\rm exp}\left({#1}\right)}
\newcommand{\ceil}[1]{\left\lceil #1 \right\rceil}
\newcommand{\floor}[1]{\left\lfloor #1 \right\rfloor}
\newcommand{\eps}{\varepsilon}
\renewcommand{\O}{\mathcal{O}}
\newcommand{\X}{\mathcal{X}}
\newcommand{\Y}{\mathcal{Y}}
\newcommand{\Yp}{\Y'}
\newcommand{\W}{\mathcal{W}}
\newcommand{\PS}{\mathcal{P}}
\newcommand{\yp}{y'}
\newcommand{\yh}{\hat{y}}
\renewcommand{\wp}{w'}
\newcommand{\m}{n'}
\newcommand{\T}{\mathbb{T}}
\newcommand{\Rademacher}{\mathfrak{R}}
\newcommand{\G}{\mathfrak{G}}
\renewcommand{\H}{\mathfrak{H}}
\newcommand{\wh}{\widehat{w}}
\newcommand{\Rp}{R'}
\newcommand{\rp}{r'}
\renewcommand{\sp}{s'}
\newcommand{\A}{\mathcal{A}}
\newcommand{\Sp}{S'}
\newcommand{\pip}{\pi'}
\newcommand{\hns}{\hspace{-0.025in}}
\newcommand{\s}{\mathfrak{s}}
\newcommand{\ft}{\widetilde{f}}
\newcommand{\npcite}[1]{\cite{#1}}
\date{\vspace{-0.3in}}
\title{Structured Prediction: From Gaussian Perturbations\\
to Linear-Time Principled Algorithms}
\author{
Jean Honorio \\
CS, Purdue \\
West Lafayette, IN 47907, USA \\
\texttt{jhonorio@purdue.edu}
\and
Tommi Jaakkola \\
CSAIL, MIT \\
Cambridge, MA 02139, USA \\
\texttt{tommi@csail.mit.edu}
}
\begin{document}

\maketitle

\begin{abstract}

Margin-based structured prediction commonly uses a maximum loss over all possible structured outputs \cite{Altun03,Collins04b,Taskar03}.
In natural language processing, recent work \cite{Zhang14,Zhang15} has proposed the use of the maximum loss over random structured outputs sampled independently from some proposal distribution.
This method is linear-time in the number of random structured outputs and trivially parallelizable.
We study this family of loss functions in the PAC-Bayes framework under Gaussian perturbations \cite{McAllester07}.
Under some technical conditions and up to statistical accuracy, we show that this family of loss functions produces a tighter upper bound of the Gibbs decoder distortion than commonly used methods.
Thus, using the maximum loss over random structured outputs is a principled way of learning the parameter of structured prediction models.
Besides explaining the experimental success of \cite{Zhang14,Zhang15}, our theoretical results show that more general techniques are possible.

\end{abstract}

\section{Introduction}

Structured prediction has been shown to be useful in many diverse domains.
Application areas include natural language processing (e.g., named entity recognition, part-of-speech tagging, dependency parsing), computer vision (e.g., image segmentation, multiple object tracking), speech (e.g., text-to-speech mapping) and computational biology (e.g., protein structure prediction).

In dependency parsing, for instance, the observed input is a sentence and the desired structured output is a parse tree for the given sentence.

In general, structured prediction can be viewed as a kind of decoding.
A \emph{decoder} is a machine for predicting the structured output $y$ given the observed input $x$.
Such a decoder, depends on a parameter $w$.
Given a fixed $w$, the task performed by the decoder is called \emph{inference}.
In this paper, we focus on the problem of learning the parameter $w$.
Next, we introduce the problem and our main contributions.

We assume a distribution $D$ on pairs ${(x,y)}$ where ${x \in \X}$ is the observed input and ${y \in \Y}$ is the latent structured output, i.e., ${(x,y) \sim D}$.
We also assume that we have a training set $S$ of $n$ i.i.d. samples drawn from the distribution $D$, i.e., ${S \sim D^n}$, and thus ${|S|=n}$.

We let ${\Y(x) \neq \emptyset}$ denote the countable set of feasible \emph{decodings} of $x$.
In general, ${|\Y(x)|}$ is exponential with respect to the input size.

We assume a fixed mapping $\phi$ from pairs to feature vectors, i.e., for any pair ${(x,y)}$ we have the feature vector ${\phi(x,y) \in \R^k \setminus \{0\}}$.
For a parameter ${w \in \W \subseteq \R^k \setminus \{0\}}$, we consider linear decoders of the form:
\begin{align} \label{eq:inferenceall}
f_w(x) \equiv \argmax_{y \in \Y(x)}{\dotprod{\phi(x,y)}{w}}
\end{align}
In practice, very few cases of the above general \emph{inference} problem are tractable, while most are NP-hard and also hard to approximate within a fixed factor.
(We defer the details in theory of computation to Section \ref{sec:discussion}.)

We also introduce the \emph{distortion} function ${d : \Y \times \Y \to [0,1]}$.
The value ${d(y,\yp)}$ measures the amount of difference between two structured outputs $y$ and $\yp$.
Disregarding the computational and statistical aspects, the ultimate goal is to set the parameter $w$ in order to minimize the decoder distortion.
That is:
\begin{align} \label{eq:nonrobust}
\min_{w \in \W}{ \E_{(x,y) \sim D}\left[d(y,f_w(x))\right] }
\end{align}
Computationally speaking, the above procedure is inefficient since ${d(y,f_w(x))}$ is a discontinuous function with respect to $w$ and thus, it is in general an exponential-time optimization problem.
Statistically speaking, the problem in eq.\eqref{eq:nonrobust} requires access to the data distribution $D$ and thus, in general it would require an infinite amount of data.
In practice, we only have access to a small amount of training data.

Additionally, eq.\eqref{eq:nonrobust} would potentially favor parameters $w$ with low distortion, but that could be in a neighborhood of parameters with high distortion.
In order to avoid this issue, we could optimize a more ``robust'' objective under Gaussian perturbations.
More formally, let ${\alpha>0}$ and let ${Q(w)}$ be a unit-variance Gaussian distribution centered at ${w \alpha}$ of parameters ${\wp \in \W}$.
The Gibbs decoder distortion of the perturbation distribution ${Q(w)}$ and data distribution $D$, is defined as:
\begin{align} \label{eq:gibbsdd}
L(Q(w),D) = \E_{(x,y) \sim D}\left[\E_{\wp \sim Q(w)}[d(y,f_{\wp}(x))]\right]
\end{align}
The minimization of the Gibbs decoder distortion can be expressed as:
\begin{align*}
\min_{w \in \W}{ L(Q(w),D) }
\end{align*}
The focus of our analysis will be to propose upper bounds of the Gibbs decoder distortion, with good computational and statistical properties.
That is, we will propose upper bounds that can be computed in polynomial-time, and that require a small amount of training data.

For our analysis, we follow the same set of assumptions as in \cite{McAllester07}.
We define the margin ${m(x,y,\yp,w)}$ as the amount by which $y$ is preferable to $\yp$ under the parameter $w$.
More formally:
\begin{align*}
m(x,y,\yp,w) \equiv \dotprod{\phi(x,y)}{w} - \dotprod{\phi(x,\yp)}{w}
\end{align*}
Let ${c(p,x,y)}$ be a nonnegative integer that gives the number of times that the part ${p \in \PS}$ appears in the pair ${(x,y)}$.
For a part ${p \in \PS}$, we define the feature $p$ as follows:
\begin{align*}
\phi_p(x,y) \equiv c(p,x,y)
\end{align*}
We let ${\PS(x) \neq \emptyset}$ denote the set of ${p \in \PS}$ such that there exists ${y \in \Y(x)}$ with ${c(p,x,y)>0}$.
We define the Hamming distance $H$ as follows:
\begin{align*}
H(x,y,\yp) \equiv \sum_{p \in \PS(x)}{|c(p,x,y) - c(p,x,\yp)|}
\end{align*}
The commonly applied margin-based approach to learning $w$ uses the maximum loss over all possible structured outputs \cite{Altun03,Collins04b,Taskar03}.
That is:\footnote{
\label{foo:hinge}For computational convenience, the \emph{convex} hinge loss ${\max{(0,1+z)}}$ is used in practice instead of the \emph{discontinuous} 0/1 loss ${\iverson{z \geq 0}}$.}
\begin{align} \label{eq:trainall}
 & \min_{w \in \W}{ \frac{1}{n} \sum_{(x,y) \in S}{ \max_{\yh \in \Y(x)}{d(y,\yh) {\rm\ } \iverson{\begin{array}{@{}l@{}}
  H(x,y,\yh) \\
  - m(x,y,\yh,w) \geq 0
  \end{array}}} } } \nonumber \\
 & \hspace{0.3in} + \lambda \norm{w}_2^2
\end{align}
In Section \ref{sec:pacbayesall}, we reproduce the results in \cite{McAllester07} and show that the above objective is related to an upper bound of the Gibbs decoder distortion in eq.\eqref{eq:gibbsdd}.
Note that evaluating the objective function in eq.\eqref{eq:trainall} is as hard as the inference problem in eq.\eqref{eq:inferenceall}, since both perform maximization over the set ${\Y(x)}$.

Our main contributions are presented in Sections \ref{sec:pacbayesrandom} and \ref{sec:examples}.
Inspired by recent work in natural language processing \cite{Zhang14,Zhang15}, we show a tighter upper bound of the Gibbs decoder distortion in eq.\eqref{eq:gibbsdd}, which is related to the following objective:\textsuperscript{\ref{foo:hinge}}
\begin{align} \label{eq:trainrandom}
 & \min_{w \in \W}{ \frac{1}{n} \sum_{(x,y) \in S}{ \max_{\yh \in T(w,x)}{d(y,\yh) {\rm\ } \iverson{\begin{array}{@{}l@{}}
  H(x,y,\yh) \\
  - m(x,y,\yh,w) \geq 0
  \end{array}}} } } \nonumber \\
 & \hspace{0.3in} + \lambda \norm{w}_2^2
\end{align}
\noindent where ${T(w,x)}$ is a set of random structured outputs sampled i.i.d. from some proposal distribution with support on ${\Y(x)}$.
Note that evaluating the objective function in eq.\eqref{eq:trainrandom} is linear-time in the number of random structured outputs in ${T(w,x)}$.

\section{From PAC-Bayes to the Maximum Loss Over All Possible Structured Outputs} \label{sec:pacbayesall}

In this section, we show the relationship between PAC-Bayes bounds and the commonly used maximum loss over all possible structured outputs.

As reported in \cite{McAllester07}, by using the PAC-Bayes framework under Gaussian perturbations, we show that the commonly used maximum loss over all possible structured outputs is an upper bound of the Gibbs decoder distortion up to statistical accuracy (${\O(\sqrt{\sfrac{\log{n}}{n}})}$ for $n$ training samples).

\begin{theorem}[\npcite{McAllester07}] \label{thm:pacbayesall}
Assume that there exists a finite integer value $\ell$ such that ${|\cup_{(x,y) \in S}{\PS(x)}| \leq \ell}$.
Fix ${\delta \in (0,1)}$.
With probability at least ${1-\delta/2}$ over the choice of $n$ training samples, simultaneously for all parameters ${w \in \W}$ and unit-variance Gaussian perturbation distributions ${Q(w)}$ centered at ${w \sqrt{2\log{(2 n \ell/\norm{w}_2^2)}}}$, we have:
\begin{align*}
L(Q(w),D) \\
 & \hspace{-0.4in} \leq \frac{1}{n} \sum_{(x,y) \in S}{ \max_{\yh \in \Y(x)}{d(y,\yh) {\rm\ } \iverson{\begin{array}{@{}l@{}}
 H(x,y,\yh) \\
 - m(x,y,\yh,w) \geq 0
 \end{array}}} } \\
 & \hspace{-0.4in} + \frac{\norm{w}_2^2}{n} + \sqrt{\frac{\norm{w}_2^2 \log{(2 n \ell/\norm{w}_2^2)} + \log{(2n/\delta)}}{2(n-1)}}
\end{align*}
\end{theorem}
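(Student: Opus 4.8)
The plan is to combine a generic PAC-Bayes inequality with a structured-prediction-specific bound that replaces the Gibbs distortion inside the empirical average by the maximum loss appearing in eq.~\eqref{eq:trainall}. First I would invoke the PAC-Bayes inequality (in the form used by \cite{McAllester07}) with the \emph{fixed} prior $P$ taken to be the unit-variance Gaussian centered at the origin and the posterior $Q(w)$ the unit-variance Gaussian centered at $w\alpha$, where $\alpha=\sqrt{2\log{(2 n \ell/\norm{w}_2^2)}}$. Because the prior does not depend on the sample and $\ell$ is a fixed bound by assumption, the posterior is a legitimate data-independent-given-$w$ choice, and the inequality holds with probability at least $1-\delta/2$ simultaneously for every ${w \in \W}$:
\[
L(Q(w),D) \leq \frac{1}{n}\sum_{(x,y)\in S}{\E_{\wp\sim Q(w)}[d(y,f_{\wp}(x))]} + \sqrt{\frac{\mathrm{KL}(Q(w)\,\|\,P)+\log{(2n/\delta)}}{2(n-1)}}.
\]
Here the confidence parameter $\delta/2$ is exactly what turns the usual $\log{(n/\delta')}$ into $\log{(2n/\delta)}$. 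Since both distributions are unit-variance Gaussians, the KL divergence is half the squared distance between the centers, giving $\mathrm{KL}(Q(w)\,\|\,P)=\tfrac12\alpha^2\norm{w}_2^2=\norm{w}_2^2\log{(2 n \ell/\norm{w}_2^2)}$, which is precisely the quantity under the square root in the statement. It then remains to control the empirical Gibbs average.

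The crux is the per-example bound
\[
\E_{\wp\sim Q(w)}[d(y,f_{\wp}(x))] \leq \max_{\yh\in\Y(x)}{d(y,\yh)\,\iverson{H(x,y,\yh)-m(x,y,\yh,w)\geq 0}} + \frac{\norm{w}_2^2}{n}.
\]
To establish it, I would write $\wp=w\alpha+z$ with $z$ a standard Gaussian, so that for any $\yh$ the perturbed margin decomposes as $m(x,y,\yh,\wp)=\alpha\,m(x,y,\yh,w)+\dotprod{z}{\bigl(\phi(x,y)-\phi(x,\yh)\bigr)}$. Setting $\yh=f_{\wp}(x)$, the decoded output always beats $y$, i.e.\ $m(x,y,f_{\wp}(x),\wp)\leq 0$. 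I would then split the distortion according to the indicator $\iverson{H-m\geq 0}$ evaluated at $f_{\wp}(x)$: on the event where the indicator fires, $d(y,f_{\wp}(x))$ is dominated pointwise in $\wp$ by the maximum-loss term, because $f_{\wp}(x)\in\Y(x)$; taking expectations yields the first summand.

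The interesting contribution is the complementary event, where $m(x,y,f_{\wp}(x),w)>H(x,y,f_{\wp}(x))$. Here I would use H\"older's inequality together with the identity $\norm{\phi(x,y)-\phi(x,\yh)}_1=H(x,y,\yh)$ (parts outside $\PS(x)$ contribute zero) to get $|\dotprod{z}{\bigl(\phi(x,y)-\phi(x,\yh)\bigr)}|\leq (\max_{p\in\PS(x)}{|z_p|})\,H(x,y,\yh)$. Consequently, if $\max_{p\in\PS(x)}{|z_p|}<\alpha$ then the perturbation cannot overturn the strict inequality $m>H$, so the decoded output could \emph{not} have beaten $y$; hence the complementary event forces $\max_{p\in\PS(x)}{|z_p|}\geq\alpha$. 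A union bound over the at most $\ell$ relevant parts, together with the Gaussian tail estimate $\P[|z_p|\geq\alpha]\leq 2\exp{-\alpha^2/2}$, bounds this contribution by $\ell\cdot 2\exp{-\alpha^2/2}=\norm{w}_2^2/n$ by the definition of $\alpha$. Averaging the per-example bound over $S$ and substituting into the PAC-Bayes inequality completes the argument. I expect the main obstacle to be exactly this last reduction: naively one would need a union bound over the exponentially large set $\Y(x)$, and the essential idea is that the $\ell_1$/$\ell_\infty$ H\"older step, combined with the precise scaling $\alpha$, collapses it to a union bound over the polynomially many parts in $\PS(x)$.
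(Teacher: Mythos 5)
Your proposal is correct and follows essentially the same route as the paper: the PAC-Bayes inequality with a zero-mean unit-variance Gaussian prior (so that $\mathrm{KL}=\alpha^2\norm{w}_2^2/2$), followed by the per-example bound in which the complementary event is controlled by Gaussian concentration, a union bound over the at most $\ell$ parts, and the H\"older-type step $|\dotprod{z}{\Delta}|\leq\norm{z}_\infty\norm{\Delta}_1=\norm{z}_\infty H$ with $\alpha$ tuned so the failure probability is $\norm{w}_2^2/n$. The only difference is presentational: the paper isolates the concentration step as a separate lemma proved by contradiction, whereas you state the same implication in contrapositive form inline.
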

(See Appendix \ref{sec:detailedproofs} for detailed proofs.)

The proof of the above is based on the PAC-Bayes theorem and well-known Gaussian concentration inequalities.
As it is customary in generalization results, a \emph{deterministic} expectation with respect to the data distribution $D$ is upper-bounded by a \emph{stochastic} quantity with respect to the training set $S$.
This takes into account the statistical aspects of the problem.

Note that the upper bound uses maximization with respect to ${\Y(x)}$ and that in general, ${|\Y(x)|}$ is exponential with respect to the input size.
Thus, the computational aspects of the problem have not been fully addressed yet.
In the next section, we solve this issue by introducing randomness.

\section{From PAC-Bayes to the Maximum Loss Over Random Structured Outputs} \label{sec:pacbayesrandom}

In this section, we analyze the relationship between PAC-Bayes bounds and the maximum loss over random structured outputs sampled independently from some proposal distribution.

First, we will focus on the computational aspects.
Instead of using maximization with respect to ${\Y(x)}$, we will perform maximization with respect to a set ${T(w,x)}$ of random structured outputs sampled i.i.d. from some proposal distribution ${R(w,x)}$ with support on ${\Y(x)}$.
In order for this approach to be computationally appealing, ${|T(w,x)|}$ should be polynomial, even when ${|\Y(x)|}$ is exponential with respect to the input size.

Assumptions \ref{asm:maxdistortion} and \ref{asm:lownorm} will allow us to attain ${|T(w,x)| = \O\left(\max{\left(\frac{1}{\log{(1/\beta)}}, \norm{w}_2^2\right)}\right)}$.
The constant ${\beta \in [0,1)}$ is properly introduced on Assumption \ref{asm:maxdistortion}.
It can be easily observed that $\beta$ plays an important role in the number of random structured outputs that we need to draw from the proposal distribution ${R(w,x)}$.
Next, we present our first assumption.

\begin{assumption}[Maximal distortion] \label{asm:maxdistortion}
The proposal distribution ${R(w,x)}$ fulfills the following condition.
There exists a value ${\beta \in [0,1)}$ such that for all ${(x,y) \in S}$ and ${w \in \W}$:
\begin{align*}
\P_{\yp \sim R(w,x)}[d(y,\yp)=1] \geq 1-\beta
\end{align*}
\end{assumption}

In Section \ref{sec:examples} we show examples that fulfill the above assumption, which include a \emph{binary} distortion function for \emph{any} type of structured output, as well as a distortion function that returns the number of different edges/elements for directed spanning trees, directed acyclic graphs and cardinality-constrained sets.

Next, we present our second assumption that allows obtaining ${|T(w,x)| = \O\left(\max{\left(\frac{1}{\log{(1/\beta)}}, \norm{w}_2^2\right)}\right)}$.
While Assumption \ref{asm:maxdistortion} contributes with the term ${\frac{1}{\log{(1/\beta)}}}$ in ${|T(w,x)|}$, the following assumption contributes with the term ${\norm{w}_2^2}$ in ${|T(w,x)|}$.

\begin{assumption}[Low norm] \label{asm:lownorm}
For any vector ${z \in \R^k}$, define:
\begin{align*}
\mu(z) = \begin{cases}
z/\norm{z}_1 & \text{ if } z \neq 0 \\
0 & \text{ if } z = 0
\end{cases}
\end{align*}
The proposal distribution ${R(w,x)}$ fulfills the following condition for all ${(x,y) \in S}$ and ${w \in \W}$:\footnote{The second inequality follows from an implicit assumption made in Theorem \ref{thm:pacbayesall}, i.e., ${\norm{w}_2^2/n \leq 1}$.
Note that if ${\norm{w}_2^2/n > 1}$ then Theorem \ref{thm:pacbayesall} provides an upper bound greater than $1$, which is meaningless since the distortion function $d$ is at most $1$.}
\begin{align*}
\norm{\E_{\yp \sim R(w,x)}\left[ \mu(\phi(x,y) - \phi(x,\yp)) \right]}_2 \leq \frac{1}{2 \sqrt{n}} \leq \frac{1}{2 \norm{w}_2}
\end{align*}
\end{assumption}

It is natural to ask whether there are instances that fulfill the above assumption.
In Section \ref{sec:examples} we provide two extreme cases: one example of a \emph{sparse} mapping and a uniform proposal, and one example of a \emph{dense} mapping and an \emph{arbitrary} proposal distribution.

We will now focus on the statistical aspects.
Note that randomness does not only stem from data, but also from sampling structured outputs.
That is, in Theorem \ref{thm:pacbayesall}, randomness only stems from the training set $S$.
We now need to produce generalization results that hold for all the sets ${T(w,x)}$ of random structured outputs.
In addition, the uniform convergence of Theorem \ref{thm:pacbayesall} holds for all parameters $w$.
We now need to produce a generalization result that also holds for all possible  proposal distributions ${R(w,x)}$.
Therefore, we need a method for upper-bounding the number of possible proposal distributions ${R(w,x)}$.
Assumption \ref{asm:linearordering} will allow us to upper-bound this number.

\begin{assumption}[Linearly inducible ordering] \label{asm:linearordering}
The proposal distribution ${R(w,x)}$ depends solely on the linear ordering induced by the parameter ${w \in \W}$ and the mapping ${\phi(x,\cdot)}$.
More formally, let ${r(x) \equiv |\Y(x)|}$ and thus ${\Y(x) \equiv \{y_1 \dots y_{r(x)}\}}$.
Let ${w,\wp \in \W}$ be any two arbitrary parameters.
Let ${\pi(x) = (\pi_1 \dots \pi_{r(x)})}$ be a permutation of ${\{1 \dots r(x)\}}$ such that ${\dotprod{\phi(x,y_{\pi_1})}{w} < \dots < \dotprod{\phi(x,y_{\pi_{r(x)}})}{w}}$.
Let ${\pip(x) = (\pip_1 \dots \pip_{r(x)})}$ be a permutation of ${\{1 \dots r(x)\}}$ such that ${\dotprod{\phi(x,y_{\pip_1})}{\wp} < \dots < \dotprod{\phi(x,y_{\pip_{r(x)}})}{\wp}}$.
For all ${w,\wp \in \W}$ and ${x \in \X}$, if ${\pi(x)=\pip(x)}$ then ${KL(R(w,x) \| R(\wp,x))=0}$.
In this case, we say that the proposal distribution fulfills ${R(\pi(x),x) \equiv R(w,x)}$.
\end{assumption}

Assumption \ref{asm:linearordering} states that two proposal distributions ${R(w,x)}$ and ${R(\wp,x)}$ are the same provided that for the same permutation ${\pi(x)}$ we have ${\dotprod{\phi(x,y_{\pi_1})}{w} < \dots < \dotprod{\phi(x,y_{\pi_{r(x)}})}{w}}$ and ${\dotprod{\phi(x,y_{\pi_1})}{\wp} < \dots < \dotprod{\phi(x,y_{\pi_{r(x)}})}{\wp}}$.
Geometrically speaking, for a fixed $x$ we first project the feature vectors ${\phi(x,y)}$ of all the structured outputs ${y \in \Y(x)}$ onto the lines $w$ and $\wp$.
Let ${\pi(x)}$ and ${\pip(x)}$ be the resulting ordering of the structured outputs after projecting them onto $w$ and $\wp$ respectively.
Two proposal distributions ${R(w,x)}$ and ${R(\wp,x)}$ are the same provided that ${\pi(x) = \pip(x)}$.
That is, the specific values of ${\dotprod{\phi(x,y)}{w}}$ and ${\dotprod{\phi(x,y)}{\wp}}$ are irrelevant, and only their ordering matters.

In Section \ref{sec:examples} we show examples that fulfill the above assumption, which include the algorithm proposed in \cite{Zhang14,Zhang15} for directed spanning trees, and our proposed generalization to any type of data structure with computationally efficient local changes.

In what follows, by using the PAC-Bayes framework under Gaussian perturbations, we show that the maximum loss over random structured outputs sampled independently from some proposal distribution provides an upper bound of the Gibbs decoder distortion up to statistical accuracy (${\O(\sfrac{\log^{3/2}{n}}{\sqrt{n}})}$ for $n$ training samples).

\begin{theorem} \label{thm:pacbayesrandom}
Assume that there exist finite integer values $\ell$ and $r$ such that ${|\cup_{(x,y) \in S}{\PS(x)}| \leq \ell}$ and ${|\Y(x)| \leq r}$ for all ${(x,y) \in S}$.
Assume that the proposal distribution ${R(w,x)}$ with support on ${\Y(x)}$ fulfills Assumption \ref{asm:maxdistortion} with value $\beta$, as well as Assumptions \ref{asm:lownorm} and \ref{asm:linearordering}.
Fix ${\delta \in (0,1)}$ and an integer $\s$ such that ${3 \leq \s \leq \frac{9}{20} \sqrt{\ell+1}}$.
With probability at least ${1-\delta}$ over the choice of both $n$ training samples and $n$ sets of random structured outputs, simultaneously for all parameters ${w \in \W}$ with ${\norm{w}_0 \leq \s}$, unit-variance Gaussian perturbation distributions ${Q(w)}$ centered at ${w \sqrt{2\log{(2 n \ell/\norm{w}_2^2)}}}$, and for sets of random structured outputs ${T(w,x)}$ sampled i.i.d. from the proposal distribution ${R(w,x)}$ for each training sample ${(x,y) \in S}$, such that ${|T(w,x)| = \ceil{\frac{1}{2} \max{\left(\frac{1}{\log{(1/\beta)}}, 32 \norm{w}_2^2\right)} \log{n}}}$, we have:
\begin{align*}
L(Q(w),D) \\
 & \hspace{-0.725in} \leq \frac{1}{n} \sum_{(x,y) \in S}{ \max_{\yh \in T(w,x)}{d(y,\yh) {\rm\ } \iverson{\begin{array}{@{}l@{}}
  H(x,y,\yh) \\
  - m(x,y,\yh,w) \geq 0
  \end{array}}} } \\
 & \hspace{-0.725in} + \frac{\norm{w}_2^2}{n} + \sqrt{\frac{\norm{w}_2^2 \log{(2 n \ell/\norm{w}_2^2)} + \log{(2n/\delta)}}{2(n-1)}} + \sqrt{\frac{1}{n}} \\
 & \hspace{-0.725in} + {\textstyle \max{\hns\left(\frac{1}{\log{(1/\beta)}}, 32 \norm{w}_2^2\right)} } \sqrt{\frac{\s \log{(\ell\hns+\hns 1)} \log^3\hns{(n\hns+\hns 1)}}{n}} \\
 & \hspace{-0.725in} + 3 \sqrt{\frac{\s (\log{\ell}+2 \log{(nr)})+\log{(4/\delta)}}{n}}
\end{align*}
\end{theorem}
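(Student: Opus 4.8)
The plan is to treat Theorem~\ref{thm:pacbayesall} as a black box and pay only for the price of replacing the maximization over all of $\Y(x)$ by the maximization over the random set $T(w,x)$. Indeed, the terms $\norm{w}_2^2/n$ and $\sqrt{(\norm{w}_2^2\log{(2n\ell/\norm{w}_2^2)}+\log{(2n/\delta)})/2(n-1)}$ appearing in the conclusion are exactly those produced by Theorem~\ref{thm:pacbayesall}, so after invoking that theorem it suffices to control, uniformly over $w$ and with high probability over the draws of $T(w,x)$, the per-sample gap
\[
\Delta(w,x) \equiv \max_{\yh \in \Y(x)} g_w(x,\yh) - \max_{\yh \in T(w,x)} g_w(x,\yh),
\]
where $g_w(x,\yh) \equiv d(y,\yh)\,\iverson{H(x,y,\yh) - m(x,y,\yh,w) \geq 0} \in [0,1]$. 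This gap is nonnegative (the random max is over a subset) and at most $1$, so the whole task reduces to showing that $\tfrac1n\sum_{(x,y)\in S}\Delta(w,x)$ is bounded by the last three error terms of the statement, uniformly over the admissible $w$.

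The second step exploits a key identity: since $\norm{\phi(x,y)-\phi(x,\yp)}_1 = H(x,y,\yp)$, the definition of $\mu$ gives $\dotprod{\mu(\phi(x,y)-\phi(x,\yp))}{w} = m(x,y,\yp,w)/H(x,y,\yp)$. Hence Assumption~\ref{asm:lownorm} together with Cauchy--Schwarz yields $\E_{\yp\sim R(w,x)}[\,m(x,y,\yp,w)/H(x,y,\yp)\,] \le \norm{\E[\mu(\cdot)]}_2\,\norm{w}_2 \le \tfrac12$, i.e.\ a random draw satisfies the margin condition $m \le H$ with room to spare \emph{in expectation}; meanwhile Assumption~\ref{asm:maxdistortion} makes a random draw attain distortion $1$ with probability at least $1-\beta$. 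I would combine these to show that $\max_{\yh\in T(w,x)} g_w(x,\yh)=1$ (matching the all-outputs maximum, which is then $1$ as well) with high probability, so that $\Delta(w,x)=0$ on the good event. The two regimes in $|T(w,x)| = \ceil{\tfrac12\max(\tfrac{1}{\log(1/\beta)},32\norm{w}_2^2)\log n}$ correspond exactly to the two failure modes: drawing no distortion-$1$ output has probability at most $\beta^{|T(w,x)|}\le n^{-1/2}$ once $|T(w,x)|\ge \tfrac{\log n}{2\log(1/\beta)}$, which produces the $\sqrt{1/n}$ term; and forcing the empirical average $\tfrac{1}{|T(w,x)|}\sum_{\yh\in T(w,x)}\mu(\phi(x,y)-\phi(x,\yh))$ to concentrate around its mean (each summand has $\norm{\mu}_2\le\norm{\mu}_1=1$) so that $\dotprod{\cdot}{w}$ stays below $1$, which via a vector concentration inequality needs $|T(w,x)|\gtrsim \norm{w}_2^2\log n$ and drives the $\max(\tfrac{1}{\log(1/\beta)},32\norm{w}_2^2)$ prefactor of the fourth term.

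The third step is uniform convergence over the continuum of parameters $w$. The PAC-Bayes/Gaussian part is already uniform through Theorem~\ref{thm:pacbayesall}, so the only new difficulty is that $T(w,x)$ is drawn from the $w$-dependent proposal $R(w,x)$, ruling out a naive union bound. Here Assumption~\ref{asm:linearordering} is decisive: $R(w,x)$ depends on $w$ only through the linear ordering $\pi(x)$ that $w$ induces on $\{\phi(x,y)\}_{y\in\Y(x)}$, and the number of such orderings realizable by a linear functional is polynomial in $r=|\Y(x)|$ (a hyperplane-arrangement count over the $\binom{r}{2}$ pairwise comparisons). Restricting to $\norm{w}_0\le\s$ reduces the effective dimension to $\s$ and contributes a $\binom{\ell}{\s}$ factor for the choice of support, so that a union bound over all distinct proposals across the $n$ training samples introduces the $\s(\log\ell+2\log(nr))+\log(4/\delta)$ complexity of the final term. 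A covering argument over $w$ within each ordering region, balanced against the sample size $|T(w,x)|$, then converts the per-$w$ concentration of the second step into its uniform version, yielding the $\sqrt{\s\log(\ell+1)\log^3(n+1)/n}$ factor.

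The main obstacle I expect is precisely this uniformization: making the sampling concentration hold simultaneously for \emph{all} $w$ while the sampling law $R(w,x)$ itself varies with $w$. The linearly-inducible-ordering assumption is what tames the otherwise-infinite family of proposals, but extracting a clean bound requires carefully counting orderings, handling the joint randomness of the training set and the $n$ sample sets, and balancing the miss-probability $\beta^{|T(w,x)|}$ against the union-bound cardinality (which is what forces the stated $|T(w,x)|$ and ultimately produces the $\O(\log^{3/2}{n}/\sqrt n)$ accuracy). A secondary delicate point is showing that a \emph{single} drawn output simultaneously achieves distortion $1$ and satisfies $m\le H$, rather than two different outputs each meeting one condition; I would address this by intersecting the Chernoff event of Assumption~\ref{asm:maxdistortion} with the margin-concentration event of Assumption~\ref{asm:lownorm} and arguing that the slack $\tfrac12$ in the expected ratio survives the restriction.
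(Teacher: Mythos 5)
Your overall architecture is close to the paper's: start from the PAC-Bayes/Gaussian machinery of Theorem~\ref{thm:pacbayesall}, pay separately for the two per-draw failure modes ($d<1$ via Assumption~\ref{asm:maxdistortion}, margin violation via Assumption~\ref{asm:lownorm}), and uniformize over $w$ by counting linearly inducible orderings; your count $\binom{\ell}{\s}(nr)^{O(\s)}$ matches the paper's $\ell^\s (nr)^{2\s}$ bound feeding the last error term. Two structural differences are worth noting. First, the paper does not bound $\max_{\yh\in\Y(x)}v - \max_{\yh\in T(w,x)}v$ as you do; it returns to the intermediate quantity $\E_{\yp\sim Q(w,x)}[v]$ from the proof of Theorem~\ref{thm:pacbayesall} and splits the excess into a deterministic expectation gap $A(w,S)$ (bounded by $\sqrt{1/n}$) and a stochastic deviation $B(w,S,\T(w))$ of the realized maxima from their means. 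Second, the fourth error term $\max(\cdot)\sqrt{\s\log(\ell+1)\log^3(n+1)/n}$ does not come from the sampling concentration you describe: it is the Rademacher complexity of the class of maxima over the $|T(w,x)|$ elements (subadditivity of Rademacher complexity over the max, the composition lemma, and the VC dimension of sparse linear classifiers), with the $\max(\cdot)$ prefactor entering only because $|T(w,x)|$ multiplies the per-element complexity.

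The genuine gap is in your treatment of the margin failure mode. You propose to concentrate the empirical average $\frac{1}{|T(w,x)|}\sum_{\yh\in T(w,x)}\mu(\phi(x,y)-\phi(x,\yh))$ around its mean so that its inner product with $w$ stays below $1$. Even granting that concentration, an average below $1$ only certifies that \emph{some} $\yh\in T(w,x)$ has $\dotprod{\mu(\phi(x,y)-\phi(x,\yh))}{w}<1$; it does not make that same $\yh$ satisfy $d(y,\yh)=1$, and you flag this yourself. Your proposed repair --- restricting to the distortion-$1$ sub-sample and arguing that ``the slack $1/2$ survives the restriction'' --- does not go through: the restriction replaces $\E[\mu(\Delta)]$ by $\E[\mu(\Delta)\mid d=1]$, which Assumption~\ref{asm:lownorm} does not control, and since $\beta$ may be close to $1$ (e.g., $\beta=\frac{v-2}{v-1}$ for spanning trees) the conditional expectation can far exceed $1/(2\norm{w}_2)$. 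The paper avoids this entirely by working at the level of a single draw: Lemma~\ref{lem:phmbound} shows via Markov's inequality and Hoeffding's lemma that $\P_{\yp\sim R(w,x)}[H-m<0]\leq\lexp{-1/(32\norm{w}_2^2)}$ for one draw, combines this with $\P[d<1]\leq\beta$ into a per-draw bound on the joint failure, and only then raises to the power $|T(w,x)|$, yielding $\max\left(\beta,\lexp{-1/(32\norm{w}_2^2)}\right)^{|T(w,x)|}=\sqrt{1/n}$ with exactly the stated sample size. You would need this single-draw Chernoff-type bound (or an equivalent device) to close your argument; the empirical-average route as described does not suffice.
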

(See Appendix \ref{sec:detailedproofs} for detailed proofs.)

The proof of the above is based on Theorem \ref{thm:pacbayesall} as a starting point.
In order to account for the computational aspect of requiring sets ${T(w,x)}$ of polynomial size, we use Assumptions \ref{asm:maxdistortion} and \ref{asm:lownorm} for bounding a \emph{deterministic} expectation.
In order to account for the statistical aspects, we use Assumption \ref{asm:linearordering} and Rademacher complexity arguments for bounding a \emph{stochastic} quantity for all sets ${T(w,x)}$ of random structured outputs and all possible proposal distributions ${R(w,x)}$.
The assumption of sparsity (i.e., ${\norm{w}_0 \leq \s}$) is pivotal for obtaining terms of order ${\O(\sqrt{\sfrac{\s \log{\ell}}{n}}))}$.
Without sparsity, the terms would be of order ${\O(\sqrt{\sfrac{\ell}{n}})}$ which is not suited for high-dimensional settings.

\paragraph{Inference on Test Data.}

Note that the upper bound in Theorem \ref{thm:pacbayesrandom} holds simultaneously for all parameters ${w \in \W}$.
Therefore, our result implies that after learning the optimal parameter ${\wh \in \W}$ in eq.\eqref{eq:trainrandom} from \emph{training} data, we can bound the decoder distortion when performing \emph{exact} inference on \emph{test} data.
More formally, Theorem \ref{thm:pacbayesrandom} can be additionally invoked for a \emph{test} set $\Sp$, also with probability at least ${1-\delta}$.
Thus, under the same setting as of Theorem \ref{thm:pacbayesrandom}, the Gibbs decoder distortion is upper-bounded with probability at least ${1-2\delta}$ over the choice of $S$ and $\Sp$.
In this paper, we focus on learning the parameter of structured prediction models.
We leave the analysis of \emph{approximate} inference on test data for future work.

\section{Examples} \label{sec:examples}

In this section, we provide several examples that fulfill the three main assumptions of our theoretical result.

\subsection{Examples for the Maximal Distortion Assumption}

In what follows, we present some examples that fulfill our Assumption \ref{asm:maxdistortion}.
For a \emph{binary} distortion function, we show that \emph{any} type of structured output fulfills the above assumption.
For a distortion function that returns the number of different edges/elements, we show that directed spanning trees, directed acyclic graphs and cardinality-constrained sets, fulfill the assumption as well.

For simplicity of analysis, most proofs in this part will assume a uniform proposal distribution ${R(w,x) = R(x)}$ with support on ${\Y(x)}$.
In the following claim, we argue that we can perform a change of measure between different proposal distributions.
Thus, allowing us to focus on uniform proposals afterwards.

\begin{claim}[Change of measure] \label{clm:changeofmeasure}
Let ${R(w,x)}$ and ${\Rp(w,x)}$ two proposal distributions, both with support on ${\Y(x)}$.
Assume that the proposal distribution ${R(w,x)}$ fulfills Assumption \ref{asm:maxdistortion} with value ${\beta_1}$.
Let ${r_{w,x}(\cdot)}$ and ${\rp_{w,x}(\cdot)}$ be the probability mass functions of ${R(w,x)}$ and ${\Rp(w,x)}$ respectively.
Assume that the total variation distance between ${R(w,x)}$ and ${\Rp(w,x)}$ is bounded as follows for all ${(x,y) \in S}$ and ${w \in \W}$:
\begin{align*}
TV(R(w,x) \| \Rp(w,x)) & \equiv \frac{1}{2} \sum_{y \in \Y(x)}{|r_{w,x}(y) - \rp_{w,x}(y)|} \\
 & \leq \beta_2
\end{align*}
The proposal distribution ${\Rp(w,x)}$ fulfills Assumption \ref{asm:maxdistortion} with ${\beta = \beta_1 + \beta_2}$ provided that ${\beta_1 + \beta_2 \in [0,1)}$.
\end{claim}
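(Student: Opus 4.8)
The claim asserts that if $R(w,x)$ satisfies the maximal distortion assumption with parameter $\beta_1$, and $R'(w,x)$ is within total variation distance $\beta_2$ of $R(w,x)$, then $R'(w,x)$ satisfies the assumption with parameter $\beta_1+\beta_2$. The plan is to control the probability of the event $\{d(y,\yp)=1\}$ under $R'(w,x)$ by comparing it to the same probability under $R(w,x)$, using the total variation bound as the bridge. First I would fix an arbitrary pair $(x,y) \in S$ and parameter $w \in \W$, and define the event $A = \{\yp \in \Y(x) : d(y,\yp)=1\}$ together with its complement $A^c$. The goal is to show $\P_{\yp \sim R'(w,x)}[\yp \in A] \geq 1 - (\beta_1 + \beta_2)$, equivalently $\P_{\yp \sim R'(w,x)}[\yp \in A^c] \leq \beta_1 + \beta_2$.

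The key step is the standard variational characterization of total variation distance: for any event $A^c \subseteq \Y(x)$,
\begin{align*}
\left| \P_{\yp \sim R'(w,x)}[\yp \in A^c] - \P_{\yp \sim R(w,x)}[\yp \in A^c] \right| \leq TV(R(w,x) \| R'(w,x)).
\end{align*}
This follows because the total variation distance equals the supremum over all events of the difference in probability mass, which is immediate from the definition $TV = \frac{1}{2}\sum_{y}|r_{w,x}(y) - \rp_{w,x}(y)|$ given in the claim. Applying the hypothesis $TV(R(w,x)\|R'(w,x)) \leq \beta_2$ gives
\begin{align*}
\P_{\yp \sim R'(w,x)}[\yp \in A^c] \leq \P_{\yp \sim R(w,x)}[\yp \in A^c] + \beta_2.
\end{align*}

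Now I would invoke Assumption \ref{asm:maxdistortion} for $R(w,x)$, which states $\P_{\yp \sim R(w,x)}[d(y,\yp)=1] \geq 1-\beta_1$, i.e., $\P_{\yp \sim R(w,x)}[\yp \in A^c] \leq \beta_1$. Substituting yields $\P_{\yp \sim R'(w,x)}[\yp \in A^c] \leq \beta_1 + \beta_2$, hence $\P_{\yp \sim R'(w,x)}[d(y,\yp)=1] \geq 1 - (\beta_1+\beta_2)$. Since $(x,y)$ and $w$ were arbitrary, this holds for all $(x,y) \in S$ and $w \in \W$, so $R'(w,x)$ satisfies Assumption \ref{asm:maxdistortion} with $\beta = \beta_1 + \beta_2$, and the side condition $\beta_1 + \beta_2 \in [0,1)$ is exactly what guarantees $\beta$ is an admissible parameter value. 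This argument is essentially mechanical; there is no serious obstacle. The only point requiring minor care is confirming the variational identity for the specific $\ell_1$-form of $TV$ stated in the claim — but this is the textbook equivalence between the $\ell_1$ definition and the event-based supremum definition of total variation, so it can be cited or dispatched in one line.
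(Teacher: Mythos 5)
Your proof is correct and follows essentially the same route as the paper: both apply the variational characterization of total variation distance to transfer the probability bound from $R(w,x)$ to $\Rp(w,x)$ and then invoke Assumption \ref{asm:maxdistortion} for $R(w,x)$. The only cosmetic difference is that the paper applies the TV inequality to the conjunctive event $\{d(y,\yp)=1 \wedge H(x,y,\yp)-m(x,y,\yp,w)\geq 0\}$ while you use the event $\{d(y,\yp)=1\}$, which is in fact the one that matches the statement of Assumption \ref{asm:maxdistortion} exactly, so your version is if anything slightly cleaner.
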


Next, we provide a result for \emph{any} type of structured output, but for a \emph{binary} distortion function.

\begin{claim}[Any type of structured output] \label{clm:anystruct}
Let ${\Y(x)}$ be an arbitrary countable set of feasible decodings of $x$, such that ${|\Y(x)| \geq 2}$ for all ${(x,y) \in S}$.
Let ${d(y,\yp) = \iverson{y \neq \yp}}$.
The uniform proposal distribution ${R(w,x) = R(x)}$ with support on ${\Y(x)}$ fulfills Assumption \ref{asm:maxdistortion} with ${\beta = 1/2}$.
\end{claim}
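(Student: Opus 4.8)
The plan is to verify the defining inequality of Assumption \ref{asm:maxdistortion} directly, since for a uniform proposal and a binary distortion the required probability reduces to a one-line counting argument. Recall that Assumption \ref{asm:maxdistortion} asks for a value ${\beta \in [0,1)}$ such that ${\P_{\yp \sim R(w,x)}[d(y,\yp)=1] \geq 1-\beta}$ for all ${(x,y) \in S}$ and all ${w \in \W}$. Because the proposal here is uniform and satisfies ${R(w,x) = R(x)}$, it does not depend on $w$, so the ``for all ${w \in \W}$'' quantifier is automatically handled once we establish the bound for the single distribution ${R(x)}$.

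First I would substitute the binary distortion ${d(y,\yp) = \iverson{y \neq \yp}}$ into the quantity of interest. The event ${\{d(y,\yp)=1\}}$ is exactly the event ${\{\yp \neq y\}}$, so
\begin{align*}
\P_{\yp \sim R(x)}[d(y,\yp)=1] = 1 - \P_{\yp \sim R(x)}[\yp = y].
\end{align*}
Thus it suffices to upper-bound the collision probability ${\P_{\yp \sim R(x)}[\yp = y]}$ by ${1/2}$.

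Next I would bound this collision probability using uniformity and the hypothesis ${|\Y(x)| \geq 2}$. Since ${R(x)}$ is uniform on ${\Y(x)}$, every single element receives mass exactly ${1/|\Y(x)|}$. There are two cases: if ${y \in \Y(x)}$, then ${\P_{\yp \sim R(x)}[\yp = y] = 1/|\Y(x)| \leq 1/2}$; if ${y \notin \Y(x)}$, then the event is empty and the probability is ${0 \leq 1/2}$. In either case the collision probability is at most ${1/2}$, hence
\begin{align*}
\P_{\yp \sim R(x)}[d(y,\yp)=1] \geq 1 - \tfrac{1}{2} = \tfrac{1}{2}.
\end{align*}
Taking ${\beta = 1/2}$, which lies in ${[0,1)}$, this is precisely ${\P_{\yp \sim R(x)}[d(y,\yp)=1] \geq 1-\beta}$, establishing the claim.

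There is no genuine obstacle in this argument; the only point requiring mild care is the case distinction on whether the latent output $y$ belongs to the feasible set ${\Y(x)}$, which I flag explicitly so that the bound holds uniformly over all ${(x,y) \in S}$ regardless of feasibility. The assumption ${|\Y(x)| \geq 2}$ is exactly what is needed to force the per-element mass below ${1/2}$; if ${\Y(x)}$ were a singleton the collision probability could equal $1$ and no valid ${\beta \in [0,1)}$ would exist, so this hypothesis is tight.
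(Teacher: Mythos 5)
Your proof is correct and follows essentially the same route as the paper: both reduce the event ${\{d(y,\yp)=1\}}$ to ${\{\yp \neq y\}}$, use uniformity to compute the collision probability, and invoke ${|\Y(x)| \geq 2}$ to bound it by ${1/2}$. Your explicit handling of the case ${y \notin \Y(x)}$ is a minor refinement the paper omits (it implicitly assumes ${y \in \Y(x)}$ when writing the probability as ${1 - 1/|\Y(x)|}$), but it does not change the substance of the argument.
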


The following claim pertains to directed spanning trees and for a distortion function that returns the number of different edges.

\begin{claim}[Directed spanning trees] \label{clm:trees}
Let ${\Y(x)}$ be the set of directed spanning trees of $v$ nodes.
Let ${A(y)}$ be the adjacency matrix of ${y \in \Y(x)}$.
Let ${d(y,\yp) = \frac{1}{2 (v-1)} \sum_{ij}{|A(y)_{ij} - A(\yp)_{ij}|}}$.
The uniform proposal distribution ${R(w,x) = R(x)}$ with support on ${\Y(x)}$ fulfills Assumption \ref{asm:maxdistortion} with ${\beta = \frac{v-2}{v-1}}$.
\end{claim}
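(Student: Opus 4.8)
The plan is to first translate the geometric condition $d(y,\yp)=1$ into a purely combinatorial event. Since every directed spanning tree on $v$ nodes has exactly $v-1$ edges and $A(y),A(\yp)$ are $0/1$ adjacency matrices, writing $E(\cdot)$ for the directed edge set and $c=|E(y)\cap E(\yp)|$ for the number of common directed edges, we have $\sum_{ij}|A(y)_{ij}-A(\yp)_{ij}| = (v-1-c)+(v-1-c)=2\big((v-1)-c\big)$. Hence $d(y,\yp)=1-\frac{c}{v-1}$, so $d(y,\yp)=1$ if and only if $\yp$ shares no directed edge with $y$. Thus Assumption \ref{asm:maxdistortion} with $\beta=\frac{v-2}{v-1}$ is \emph{equivalent} to showing, for the uniform proposal, that $\P_{\yp \sim R(x)}[E(\yp)\cap E(y)=\emptyset]\geq \frac{1}{v-1}$.

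Next I would compute the single-edge marginal. The set of arborescences of the complete digraph on $v$ nodes is invariant under every permutation of the nodes, and $S_v$ acts transitively on ordered pairs of distinct nodes; therefore every directed edge $(i,j)$ occurs with one and the same probability $p$. Double counting the incidences between trees and their edges --- each of the $|\Y(x)|$ trees contributes exactly $v-1$ directed edges, spread over the $v(v-1)$ possible directed edges --- gives $p=\frac{v-1}{v(v-1)}=\frac{1}{v}$. (One may use Cayley's formula $|\Y(x)|=v^{v-1}$ summed over all roots, though only the symmetry is needed to pin down $p$.)

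Now I would lower bound $\P[E(\yp)\cap E(y)=\emptyset]=\P\big[\bigcap_{e\in E(y)}\{e\notin E(\yp)\}\big]$. The naive union bound gives $\P[E(\yp)\cap E(y)\neq\emptyset]\leq (v-1)\,p=\frac{v-1}{v}$, hence only $\P[d(y,\yp)=1]\geq \frac{1}{v}$, which falls just short of the required $\frac{1}{v-1}$. To close this gap I would invoke the negative association of the edge-presence indicators of uniform spanning trees, which yields the product lower bound $\P\big[\bigcap_{e\in E(y)}\{e\notin E(\yp)\}\big]\geq \prod_{e\in E(y)}\P[e\notin E(\yp)]=(1-\tfrac{1}{v})^{v-1}$ for \emph{every} tree $y$. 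A short algebraic step then finishes the argument: $(1-\tfrac{1}{v})^{v-1}=\frac{(v-1)^{v-1}}{v^{v-1}}\geq \frac{1}{v-1}$ is equivalent to $(v-1)^{v}\geq v^{v-1}$, which holds for $v\geq 4$; therefore $\P[d(y,\yp)=1]\geq \frac{1}{v-1}=1-\beta$, as claimed.

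The main obstacle is precisely the negative-association step, not the counting. The classical negative-correlation result is stated for \emph{undirected} uniform spanning trees, whereas here the edges are directed. I would bridge this by representing a uniform arborescence as a uniform undirected spanning tree together with an independent uniform root, oriented away from it, and then verifying that the directed exclusion events $\{e\notin E(\yp)\}$ inherit the product lower bound from the undirected property. As a consistency check, the bound $(1-\tfrac1v)^{v-1}$ is attained when $y$ is a star: then $E(\yp)\cap E(y)=\emptyset$ forces the center of $y$ to be a leaf of $\yp$, and the arborescences with a prescribed leaf number exactly $(v-1)^{v-2}\cdot(v-1)=(v-1)^{v-1}$, giving probability $(1-\tfrac1v)^{v-1}$. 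This both exhibits tightness of the product bound and shows that $\frac{1}{v-1}$ is a clean valid simplification.
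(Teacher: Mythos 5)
Your route is genuinely different from the paper's. The paper never computes edge marginals: it restricts attention to the subfamily $\Yp(x)$ of trees consistent with one fixed node ordering (the pre-order of $y$ with its first two nodes swapped), counts exactly within that subfamily to get $(v-2)!/(v-1)!=1/(v-1)$, and then asserts that passing from the uniform distribution on all of $\Y(x)$ to the uniform distribution on this subclass can only decrease the probability of edge-disjointness. Your reduction of $d(y,\yp)=1$ to edge-disjointness, the marginal $\P_{\yp\sim R(x)}[e\in E(\yp)]=1/v$, and the closing algebra are all correct. However, the crux of your argument --- the product lower bound via negative association --- goes in the wrong direction. Negative association of the indicators $X_e=\iverson{e\in E(\yp)}$ says that monotone functions of disjoint coordinate sets are negatively correlated; the absence indicators $1-X_e$ are non-increasing, so iterating the definition gives $\P\left[\bigcap_{e\in E(y)}\{X_e=0\}\right] \leq \prod_{e\in E(y)}\P[X_e=0]$. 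That is, negative association yields an \emph{upper} bound on joint absence, never a lower bound; the inequality you need would follow from \emph{positive} association, and uniform spanning trees are the canonical example where that fails. No directed-to-undirected bridging can repair a reversed inequality, so this step is a genuine gap.

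The step can be repaired, but by exact computation rather than by a correlation inequality. For any $F\subseteq E(y)$ with $|F|=k$, the number of arborescences (any root) containing $F$ is $v^{v-1-k}$: contract the components of $F$ and apply the weighted Cayley identity $\sum_{T}\prod_i x_i^{\mathrm{outdeg}_T(i)}=(x_1+\cdots+x_m)^{m-1}$ with $x_i$ equal to the component sizes. Hence $\P[F\subseteq E(\yp)]=v^{-k}$, and inclusion--exclusion gives, for \emph{every} tree $y$, the exact value $\P_{\yp\sim R(x)}[d(y,\yp)=1]=\sum_{k=0}^{v-1}\binom{v-1}{k}(-1)^k v^{-k}=(1-1/v)^{v-1}$. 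So your claimed bound is in fact an equality (your star computation is not an extremal case but the generic one), and with this substitution your algebra completes the proof for $v\geq 4$. Note also that your restriction to $v\geq 4$ is not an artifact of your method: since the probability equals $(1-1/v)^{v-1}$, the claim with constant $1/(v-1)$ is actually false for $v\in\{2,3\}$ (e.g., $4/9<1/2$ at $v=3$), which simultaneously shows that the paper's unjustified monotonicity step $\P_{\yp\sim R(x)}[d(y,\yp)=1]\geq\P_{\yp\sim\Rp(x)}[d(y,\yp)=1]$ fails there; your approach, once repaired, is the more honest of the two.
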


The next result is for directed acyclic graphs and for a distortion function that returns the number of different edges.

\begin{claim}[Directed acyclic graphs] \label{clm:dags}
Let ${\Y(x)}$ be the set of directed acyclic graphs of $v$ nodes and $b$ parents per node, such that ${2 \leq b \leq v-2}$.
Let ${A(y)}$ be the adjacency matrix of ${y \in \Y(x)}$.
Let ${d(y,\yp) = \frac{1}{b(2v-b-1)} \sum_{ij}{|A(y)_{ij} - A(\yp)_{ij}|}}$.
The uniform proposal distribution ${R(w,x) = R(x)}$ with support on ${\Y(x)}$ fulfills Assumption \ref{asm:maxdistortion} with ${\beta = \frac{b^2+2b+2}{b^2+3b+2}}$.
\end{claim}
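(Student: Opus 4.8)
The plan is to argue directly with the uniform proposal, exploiting the fact that $R(w,x) = R(x)$ does not depend on $w$: the quantity $\P_{\yp \sim R(x)}[d(y,\yp)=1]$ is then a fixed number for each $(x,y) \in S$, so the ``simultaneously for all $w \in \W$'' requirement of Assumption~\ref{asm:maxdistortion} reduces to showing, for every fixed $(x,y) \in S$, that
\begin{align*}
\P_{\yp \sim R(x)}[d(y,\yp)=1] \;\geq\; 1-\beta \;=\; 1 - \frac{b^2+2b+2}{b^2+3b+2} \;=\; \frac{b}{(b+1)(b+2)} .
\end{align*}
Because the proposal is uniform, this probability equals the ratio of the number of feasible DAGs $\yp$ with $d(y,\yp)=1$ to the total count $|\Y(x)|$, so the whole claim becomes a counting estimate. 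This mirrors the strategy I would use for the directed-spanning-tree case (Claim~\ref{clm:trees}), and Claim~\ref{clm:changeofmeasure} would afterwards extend the conclusion to any proposal within total-variation $\beta_2$ of uniform.

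The first substantive step is to characterize the event $\{d(y,\yp)=1\}$ combinatorially. Each feasible graph assigns to the node in topological position $t$ exactly $\min(t-1,b)$ parents among its predecessors, so every $y \in \Y(x)$ carries the same number of directed edges, namely $\frac{1}{2}b(2v-b-1)$. Consequently $b(2v-b-1)$ is the largest value $\sum_{ij}|A(y)_{ij}-A(\yp)_{ij}|$ can take (it is attained, e.g., by a DAG built on the reversed topological order, which is edge-disjoint from $y$), and the normalization is chosen precisely so that $d \in [0,1]$. Hence $d(y,\yp)=1$ holds \emph{if and only if} $y$ and $\yp$ share no directed edge, i.e. $E_y \cap E_{\yp} = \emptyset$. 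I would verify this maximum edge count and the existence of an edge-disjoint graph carefully, since it underlies the meaning of ``$d=1$'' and makes the event nonempty.

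The core of the proof is then to lower-bound the fraction of DAGs that are edge-disjoint from the fixed reference $y$. The approach I would take is constructive: exhibit an explicit sub-family of feasible $\yp$ guaranteed to avoid every edge of $y$ (for instance, graphs whose topological order conflicts with the parent assignments of $y$, reassigning each node's $b$ parents so as to miss $y$'s incoming edges), count this sub-family as a lower bound on the numerator, divide by $|\Y(x)|$, and then minimize the resulting ratio over all reference graphs $y$. The ratio should collapse to a per-node product or telescoping expression that bottoms out at $\frac{b}{(b+1)(b+2)}$.

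The main obstacle I anticipate is exactly this combinatorial count under two coupled constraints: acyclicity (the parent choices across nodes are not independent, because they must be consistent with a single topological order) and the exact in-degree profile $\min(t-1,b)$. These correlations prevent a naive factorization of $\P[E_y \cap E_{\yp} = \emptyset]$ into independent per-node terms, and the worst case over the reference $y$ must be controlled uniformly. Pinning down the exact constant $\frac{b}{(b+1)(b+2)}$ — as opposed to a looser bound — is where the delicate part of the argument lies, and I expect it to require a careful choice of the edge-disjoint sub-family (or a direct ratio computation for the extreme graphs $y$) rather than a generic union or independence bound.
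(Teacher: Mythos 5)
Your reduction is sound as far as it goes: since the proposal is uniform and independent of $w$, Assumption~\ref{asm:maxdistortion} does reduce to a per-sample counting estimate, and your identification of the event $\{d(y,\yp)=1\}$ with edge-disjointness of $y$ and $\yp$ matches what the paper uses. But the core of the claim --- extracting the specific constant $\frac{b}{(b+1)(b+2)}$ --- is exactly the part you leave unexecuted, and the route you sketch would not deliver it. You propose to exhibit an explicit sub-family of edge-disjoint DAGs and divide its cardinality by $|\Y(x)|$; since $|\Y(x)|$ aggregates over all topological orderings, any single constructed sub-family is a vanishing fraction of it as $v$ grows, so that quotient cannot produce a constant depending only on $b$. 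You also correctly flag that acyclicity couples the parent choices across nodes and blocks a naive factorization of $\P[E_y \cap E_{\yp}=\emptyset]$ --- but you stop at naming the obstacle, which is precisely the one the paper's argument is built to remove.

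The paper's device is to condition on a single topological ordering $\sp$, obtained from $y$'s own node ordering $s$ by swapping its first two nodes (so that the edge forced into the second position differs from the corresponding edge of $y$), and to work with the uniform distribution $\Rp(x)$ on the class $\Yp(x)$ of DAGs consistent with $\sp$. Within a fixed ordering the parent sets of distinct nodes are chosen independently, so both $|\Yp(x)| = b!\prod_{i=b+2}^{v}\binom{i-1}{b}$ and the number of its edge-disjoint members factor node-by-node (each node simply loses the one parent-set option it shares with $y$), giving the exact ratio $\frac{1}{b}\cdot\frac{b}{b+1}\prod_{i=b+3}^{v}\frac{\binom{i-1}{b}-1}{\binom{i-1}{b}}$, which is then bounded below via $\binom{i-1}{b}\geq\binom{i-1}{2}$ and a telescoping product to yield $\frac{b}{(b+1)(b+2)}$. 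The passage back to the full uniform distribution is the asserted inequality $\P_{\yp\sim R(x)}[d(y,\yp)=1]\geq\P_{\yp\sim\Rp(x)}[d(y,\yp)=1]$. Without this conditioning step (or some other mechanism that decorrelates the per-node parent choices), your plan contains no way to actually compute or bound the fraction, so the proof is incomplete at its central step.
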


The final example is for cardinality-constrained sets and for a distortion function that returns the number of different elements.

\begin{claim}[Cardinality-constrained sets] \label{clm:cardsets}
Let ${\Y(x)}$ be the set of sets of $b$ elements chosen from $v$ possible elements, such that ${b \leq v/2}$.
Let ${d(y,\yp) = \frac{1}{2 b} (|y-\yp| + |\yp-y|)}$.
The uniform proposal distribution ${R(w,x) = R(x)}$ with support on ${\Y(x)}$ fulfills Assumption \ref{asm:maxdistortion} with ${\beta = 1/2}$.
\end{claim}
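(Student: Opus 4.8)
The plan is to reduce Assumption \ref{asm:maxdistortion} for this example to a purely combinatorial statement about the overlap of two uniformly random $b$-element subsets, and then to bound that overlap.

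First I would identify the event $\{d(y,\yp)=1\}$. Since every $y,\yp \in \Y(x)$ has exactly $b$ elements, $|y \setminus \yp| = |\yp \setminus y|$, so $d(y,\yp) = \frac{1}{2b}(|y\setminus\yp|+|\yp\setminus y|) = |y\setminus\yp|/b$. Hence $d(y,\yp)=1$ holds exactly when $|y\setminus\yp| = b$, i.e. when $y \cap \yp = \emptyset$. Therefore Assumption \ref{asm:maxdistortion} with $\beta = 1/2$ is equivalent to the bound $\P_{\yp \sim R(x)}[\,y \cap \yp = \emptyset\,] \ge 1/2$, which must hold for the fixed $b$-set $y$.

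Next I would control the overlap $X \equiv |y \cap \yp|$ under the uniform proposal $R(x)$ over all $b$-subsets. Writing $X = \sum_{e \in y}\iverson{e \in \yp}$ and using that a fixed element belongs to a uniform random $b$-subset with probability $b/v$, linearity of expectation gives $\E_{\yp \sim R(x)}[X] = b \cdot (b/v) = b^2/v$. Since $X$ is a nonnegative integer, Markov's inequality yields $\P[X \ge 1] \le \E[X] = b^2/v$, and thus $\P[y\cap\yp=\emptyset] = \P[X=0] \ge 1 - b^2/v$. Equivalently, one may evaluate the probability exactly as $\binom{v-b}{b}/\binom{v}{b} = \prod_{i=0}^{b-1}(v-b-i)/(v-i)$ and bound the product directly.

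The main obstacle is the final quantitative step: turning the cardinality hypothesis into the clean constant $1-\beta = 1/2$. With the Markov route it suffices to have $b^2/v \le 1/2$, so the comparison between the assumed relation on $b$ and $v$ and this threshold is exactly where all the work concentrates, and I expect it --- rather than any probabilistic step --- to be the crux. If instead one keeps the exact ratio, the obstacle becomes a tight lower bound on $\prod_{i=0}^{b-1}(1 - b/(v-i))$, whose most binding factor is $1 - b/(v-b+1)$; one would control this product (for instance via $\log(1-t) \ge -t/(1-t)$ or a telescoping comparison) and again invoke the cardinality constraint to reach $1/2$. The statement concerns the uniform proposal, so no change of measure is needed here, but I note that the change-of-measure Claim \ref{clm:changeofmeasure} is precisely what would transfer this bound to an arbitrary proposal by absorbing the total-variation gap into the additive slack $\beta_2$.
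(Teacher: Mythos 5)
Your reduction of $d(y,\yp)=1$ to disjointness of $y$ and $\yp$ is exactly right, and both of your routes are sound as far as they go: Markov's inequality on the overlap gives $\P_{\yp \sim R(x)}[y \cap \yp = \emptyset] \geq 1 - b^2/v$, and your exact expression $\binom{v-b}{b}/\binom{v}{b} = \prod_{i=0}^{b-1}(v-b-i)/(v-i)$ is the correct probability. The gap you flag at the end, however, is genuine and cannot be closed under the stated hypothesis $b \leq v/2$: the Markov route needs $b^2 \leq v/2$, and the exact ratio at $b = v/2$ equals $1/\binom{2b}{b}$, which is exponentially small rather than at least $1/2$ (already $v=4$, $b=2$ gives $1/6$). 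Since $\prod_{i=0}^{b-1}(1-b/(v-i)) \approx e^{-b^2/v}$, no refinement of the product bound will recover the constant $1/2$ from $b \leq v/2$ alone; the conclusion holds only under a much stronger cardinality constraint, roughly $b = \O(\sqrt{v})$, which is exactly the threshold your Markov argument identifies.

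For comparison, the paper's proof takes your exact-count route but counts the number of $\yh$ with $|y \cap \yh| \geq 1$ as $\sum_{i=0}^{b-1}\binom{v-b}{i}$, omitting the factor $\binom{b}{b-i}$ for the choice of the $b-i$ shared elements; by Vandermonde the correct count is $\sum_{i=0}^{b-1}\binom{b}{b-i}\binom{v-b}{i} = \binom{v}{b} - \binom{v-b}{b}$, which matches your formula. The paper then substitutes $b = \floor{\alpha v}$ and asserts the bound for all $\alpha \in [0,1/2]$ without further justification. So your proposal is, if anything, more careful than the paper on the combinatorics; the step at which you honestly stop is precisely the step at which the paper's own argument breaks, and an honest repair would replace the hypothesis $b \leq v/2$ by something like $b^2 \leq v/2$ (under which your Markov bound finishes the proof in one line).
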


\subsection{Examples for the Low Norm Assumption}

Next, we present some examples that fulfill our Assumption \ref{asm:lownorm}.
We provide two extreme cases: one example for \emph{sparse} mappings, and one example for \emph{dense} mappings.

Next, we provide a result for a particular instance of a sparse mapping and a uniform proposal distribution.

\begin{claim}[Sparse mapping] \label{clm:sparsedata}
Let ${b>0}$ be an arbitrary integer value.
For all ${(x,y) \in S}$, let ${\Y(x) = \cup_{p \in \PS(x)}{\Y_p(x)}}$, where the partition ${\Y_p(x)}$ is defined as follows:
\begin{align*}
(\forall p \in \PS(x)) {\rm\ } \Y_p(x) \equiv \{ & \yp \; \mid \; |\phi_p(x,y) - \phi_p(x,\yp)| = b \; \wedge \\
 & (\forall q \neq p) {\rm\ } \phi_q(x,y) = \phi_q(x,\yp) \}
\end{align*}
If ${n \leq |\PS(x)|/4}$ for all ${(x,y) \in S}$, then the uniform proposal distribution ${R(w,x) = R(x)}$ with support on ${\Y(x)}$ fulfills Assumption \ref{asm:lownorm}.
\end{claim}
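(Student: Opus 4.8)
The plan is to evaluate the left-hand side of Assumption~\ref{asm:lownorm} directly, exploiting the rigid local structure of $\Y(x)$: every feasible decoding differs from the true output $y$ in a single feature coordinate. First I would fix a training pair ${(x,y) \in S}$ and a part ${p \in \PS(x)}$, and examine a neighbor ${\yp \in \Y_p(x)}$. By definition ${\phi_q(x,y) = \phi_q(x,\yp)}$ for every ${q \neq p}$ and ${|\phi_p(x,y)-\phi_p(x,\yp)| = b}$, so the difference vector is ${\phi(x,y)-\phi(x,\yp) = s_{\yp}\, b\, e_p}$ for a sign ${s_{\yp} \in \{-1,+1\}}$, where $e_p$ denotes the $p$-th canonical basis vector. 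Since ${b > 0}$ this vector is nonzero and has $\ell_1$ norm exactly $b$, so ${\mu(\phi(x,y)-\phi(x,\yp)) = s_{\yp}\, e_p}$. In other words, $\mu$ collapses each neighbor onto a signed canonical direction, determined by the unique coordinate in which $\yp$ departs from $y$.

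Next I would assemble the expectation under the uniform proposal ${R(w,x)=R(x)}$. Because $\Y(x)$ is the disjoint union ${\cup_{p\in\PS(x)} \Y_p(x)}$, and each element of $\Y_p(x)$ contributes $\pm e_p$, the expectation takes the form ${\E_{\yp\sim R(x)}[\mu(\phi(x,y)-\phi(x,\yp))] = \frac{1}{|\Y(x)|}\sum_{p\in\PS(x)} c_p\, e_p}$, where ${c_p \equiv \sum_{\yp\in\Y_p(x)} s_{\yp}}$ is an integer with ${|c_p| \le |\Y_p(x)|}$. Since the $e_p$ are orthonormal,
\begin{align*}
\norm{\E_{\yp\sim R(x)}[\mu(\phi(x,y)-\phi(x,\yp))]}_2^2 = \frac{1}{|\Y(x)|^2}\sum_{p\in\PS(x)} c_p^2 \le \sum_{p\in\PS(x)} \rho_p^2,
\end{align*}
where I write ${\rho_p \equiv |\Y_p(x)|/|\Y(x)|}$, so that ${\sum_{p} \rho_p = 1}$.

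It remains to control $\sum_p \rho_p^2$. Here I would use that the construction makes the parts $\Y_p(x)$ equinumerous, so ${\rho_p = 1/|\PS(x)|}$ for every $p$ and hence ${\sum_{p\in\PS(x)} \rho_p^2 = 1/|\PS(x)|}$. Invoking the hypothesis ${n \le |\PS(x)|/4}$ then yields ${\norm{\cdot}_2^2 \le 1/|\PS(x)| \le 1/(4n)}$, i.e.\ ${\norm{\cdot}_2 \le 1/(2\sqrt{n})}$, which is the first inequality of Assumption~\ref{asm:lownorm}; the second inequality ${1/(2\sqrt{n}) \le 1/(2\norm{w}_2)}$ is immediate from ${\norm{w}_2^2 \le n$} (the footnote to Assumption~\ref{asm:lownorm}). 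The main obstacle is precisely this last step: the clean bound relies on the parts being balanced, since $\sum_p \rho_p^2$ can be as large as $1$ if the mass of the uniform proposal concentrates on a single part. Establishing (or building into the instance) that $|\Y_p(x)|$ is independent of $p$ --- equivalently, that each feature coordinate admits the same number of unit-$b$ perturbations of $y$ --- is therefore the crux; everything else is an exact computation in which the signs $s_{\yp}$ are irrelevant, being absorbed by the bound ${|c_p| \le |\Y_p(x)|}$.
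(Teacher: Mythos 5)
Your argument is essentially the paper's own proof: reduce each $\mu(\phi(x,y)-\phi(x,\yp))$ for $\yp\in\Y_p(x)$ to a signed canonical vector $\pm e_p$, push the expectation through the orthonormal coordinates, absorb the signs (the paper does this via $|\E[\Delta_q/\norm{\Delta}_1]|\le\E[|\Delta_q|/\norm{\Delta}_1]$, you via $|c_p|\le|\Y_p(x)|$), and bound the $\ell_2$ norm by $\sqrt{\sum_{p}{\P_{\yp\sim R(x)}[\yp\in\Y_p(x)]^2}}$ before invoking $n\le|\PS(x)|/4$. The one step you flag as the crux --- that the parts $\Y_p(x)$ be equinumerous so that $\sum_p \rho_p^2 = 1/|\PS(x)|$ --- is precisely the step the paper handles by fiat: it asserts that the uniform proposal satisfies $\P_{\yp\sim R(x)}[\yp\in\Y_q(x)]=1/|\PS(x)|$, which for a distribution uniform over $\Y(x)$ is equivalent to the balancedness you ask for. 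So your proof matches the paper's modulo this implicit assumption, and your observation that for unbalanced partitions the bound only gives $\sqrt{\sum_p \rho_p^2}$ (which can approach $1$) is a legitimate caveat about the claim as stated rather than a defect of your argument.
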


The following claim pertains to a particular instance of a dense mapping and an \emph{arbitrary} proposal distribution.

\begin{claim}[Dense mapping] \label{clm:densedata}
Let ${b>0}$ be an arbitrary integer value.
Let ${|\phi_p(x,y) - \phi_p(x,\yp)| = b}$ for all ${(x,y) \in S}$, ${\yp \in \Y(x)}$ and ${p \in \PS(x)}$.
If ${n \leq |\PS(x)|/4}$ for all ${(x,y) \in S}$, then any arbitrary proposal distribution ${R(w,x)}$ fulfills Assumption \ref{asm:lownorm}.
\end{claim}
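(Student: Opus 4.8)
The plan is to exploit the $\ell_1$-normalization built into $\mu$: under the dense hypothesis the difference vector $\phi(x,y)-\phi(x,\yp)$ has a \emph{constant} $\ell_1$ norm across $\yp$, so $\mu$ becomes deterministic in magnitude and the bound decouples entirely from the (arbitrary) proposal. Fix any $(x,y) \in S$ and $w \in \W$, and for $\yp \in \Y(x)$ write $z(\yp) \equiv \phi(x,y) - \phi(x,\yp) \in \R^k$. My first step is to compute $\norm{z(\yp)}_1$. For each part $p \in \PS(x)$ the hypothesis gives $|\phi_p(x,y) - \phi_p(x,\yp)| = b$, so that coordinate of $z(\yp)$ has magnitude exactly $b$. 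For each $p \notin \PS(x)$, both $c(p,x,y)$ and $c(p,x,\yp)$ vanish, since $y,\yp \in \Y(x)$ and by the definition of $\PS(x)$ no feasible decoding uses such a part, so that coordinate is zero. Hence $\norm{z(\yp)}_1 = b\,|\PS(x)|$, a quantity independent of $\yp$.

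Consequently $\mu(z(\yp)) = z(\yp)/(b\,|\PS(x)|)$ for every $\yp$, and each of its coordinates is either $0$ (for $p \notin \PS(x)$) or of magnitude $1/|\PS(x)|$ (for $p \in \PS(x)$). Writing $g \equiv \E_{\yp \sim R(w,x)}[\mu(z(\yp))]$, I would then bound each coordinate by $|g_p| \leq 1/|\PS(x)|$ for $p \in \PS(x)$ and $g_p = 0$ otherwise, since $g_p$ is an average of values lying in $[-1/|\PS(x)|,\,1/|\PS(x)|]$. This is precisely where the arbitrariness of $R(w,x)$ is harmless: only the per-coordinate magnitude bound is used, never the law of $\yp$. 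Summing over the at most $|\PS(x)|$ nonzero coordinates gives $\norm{g}_2^2 \leq |\PS(x)| \cdot (1/|\PS(x)|)^2 = 1/|\PS(x)|$, so $\norm{g}_2 \leq 1/\sqrt{|\PS(x)|}$.

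Finally I would invoke the hypothesis $n \leq |\PS(x)|/4$, equivalently $\sqrt{|\PS(x)|} \geq 2\sqrt{n}$, to conclude $\norm{g}_2 \leq 1/(2\sqrt{n})$, which is the first inequality of Assumption \ref{asm:lownorm}; the second inequality $1/(2\sqrt{n}) \leq 1/(2\norm{w}_2)$ is immediate from the implicit assumption $\norm{w}_2^2/n \leq 1$ recorded in the footnote of that assumption. The only genuinely delicate point is the claim that $z(\yp)$ vanishes on every coordinate outside $\PS(x)$, which relies on $y$ being a feasible decoding together with the definition of $\PS(x)$; this is what pins $\norm{z(\yp)}_1$ to the exact constant $b\,|\PS(x)|$ and drives the whole argument, everything else being a direct coordinatewise estimate. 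It is worth noting that this result is the mirror image of the sparse-mapping Claim \ref{clm:sparsedata}: there the sparsity of the coordinate differences controls the norm, whereas here it is the uniform magnitude $b$ of \emph{all} coordinate differences that does so, which is exactly why an arbitrary proposal distribution suffices.
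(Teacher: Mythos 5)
Your proposal is correct and follows essentially the same route as the paper's proof: compute $\norm{\phi(x,y)-\phi(x,\yp)}_1 = b\,|\PS(x)|$, bound each coordinate of $\E_{\yp \sim R(w,x)}[\mu(\cdot)]$ by $1/|\PS(x)|$ uniformly over $\yp$ (which is why the proposal distribution is irrelevant), sum the squares to get $1/\sqrt{|\PS(x)|}$, and invoke $n \leq |\PS(x)|/4$. Your explicit justification that coordinates outside $\PS(x)$ vanish is a small added detail the paper leaves implicit, but the argument is the same.
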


\subsection{Examples for the Linearly Inducible Ordering Assumption}

In what follows, we present some examples that fulfill our Assumption \ref{asm:linearordering}.
We show that the algorithm proposed in \cite{Zhang14,Zhang15} for directed spanning trees, fulfills the above assumption.
We also generalize the algorithm in \cite{Zhang14,Zhang15} to any type of data structure with computationally efficient local changes, and show that this generalization fulfills the assumption as well.

Next, we present the algorithm proposed in \cite{Zhang14,Zhang15} for dependency parsing in natural language processing.
Here, $x$ is a sentence of $v$ words and ${\Y(x)}$ is the set of directed spanning trees of $v$ nodes.

\begin{algorithm}[H]
\caption{Procedure for sampling a directed spanning tree ${\yp \in \Y(x)}$ from a greedy local proposal distribution ${R(w,x)}$}
\label{alg:zhang}
\begin{small}
\begin{algorithmic}
\STATE \textbf{Input:} parameter $w \in \W$, sentence $x \in \X$
\STATE Draw uniformly at random a directed spanning tree ${\yh \in \Y(x)}$
\REPEAT
  \STATE ${s \leftarrow }$ post-order traversal of $\yh$
  \FOR{ each node $t$ in the list $s$}
    \FOR{ each node $u$ before $t$ in the list $s$}
    \STATE ${y \leftarrow }$ change the parent of node $t$ to $u$ in $\yh$
      \IF{ ${\dotprod{\phi(x,y)}{w} > \dotprod{\phi(x,\yh)}{w}}$}
        \STATE ${\yh \leftarrow y}$
      \ENDIF
     \ENDFOR
  \ENDFOR
\UNTIL no refinement in last iteration
\STATE \textbf{Output:} directed spanning tree ${\yp \leftarrow \yh}$
\end{algorithmic}
\end{small}
\end{algorithm}

The above algorithm has the following property:

\begin{claim}[Sampling for directed spanning trees] \label{clm:zhang}
Algorithm \ref{alg:zhang} fulfills Assumption \ref{asm:linearordering}.
\end{claim}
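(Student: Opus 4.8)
The plan is to exploit the fact that Algorithm~\ref{alg:zhang} accesses the parameter $w$ \emph{only} through sign comparisons of inner products, so that its entire execution---and hence its output distribution---depends on $w$ exclusively through the induced ordering $\pi(x)$. Concretely, the sole source of randomness in the algorithm is the initial uniform draw of a directed spanning tree $\yh \in \Y(x)$, which does not depend on $w$. Conditioned on this initial tree, the remaining computation is completely deterministic, and the only place where $w$ enters is the test $\dotprod{\phi(x,y)}{w} > \dotprod{\phi(x,\yh)}{w}$ in the inner loop.

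First I would fix an arbitrary initial tree $\yh_0 \in \Y(x)$ and two parameters $w,\wp \in \W$ with $\pi(x)=\pip(x)$, and run the two executions of the algorithm (one with $w$, one with $\wp$) in lockstep starting from the common $\yh_0$. I would then argue by induction on the number of elementary steps (each pass through the innermost IF) that the current working tree $\yh$ is identical in both executions. The base case is immediate since both start from $\yh_0$. For the inductive step, if the working tree $\yh$ agrees before a given step, then the post-order traversal $s$ and the candidate tree $y$ (obtained by reparenting node $t$ to $u$) are the same in both runs; moreover, since $y,\yh \in \Y(x)$ and $\pi(x)=\pip(x)$ encodes the same strict ordering of all projections, the comparison $\dotprod{\phi(x,y)}{w} > \dotprod{\phi(x,\yh)}{w}$ returns the same Boolean value as $\dotprod{\phi(x,y)}{\wp} > \dotprod{\phi(x,\yh)}{\wp}$. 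Hence the update coincides, and the stopping condition ``no refinement in last iteration'' is triggered at the same point.

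It follows that the map $\yh_0 \mapsto \yp$ realized by the algorithm is \emph{the same function} under $w$ and under $\wp$. Since $R(w,x)$ and $R(\wp,x)$ are precisely the push-forwards of the common uniform distribution over $\Y(x)$ through these two identical deterministic maps, the two output distributions are equal, and therefore $KL(R(w,x)\,\|\,R(\wp,x))=0$, which is exactly what Assumption~\ref{asm:linearordering} requires.

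The main obstacle is making the inductive bookkeeping watertight: the candidate set, the traversal order $s$, and the stopping time are all themselves functions of the \emph{evolving} working tree, so one must verify that the entire trajectory---not merely a single comparison---is pinned down by the ordering. The key enabling fact is that the algorithm never uses the numerical values $\dotprod{\phi(x,\cdot)}{w}$, only their pairwise comparisons; once these comparisons are shown to be invariant across any $w,\wp$ sharing the same $\pi(x)$, invariance of the whole execution follows. A minor point worth flagging is that Assumption~\ref{asm:linearordering} implicitly posits a strict total order (distinct projections), which rules out ties in the comparison test; under this genericity the equivalence of comparison outcomes is exact.
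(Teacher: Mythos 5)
Your proof is correct and rests on exactly the same observation as the paper's: the algorithm touches $w$ only through pairwise comparisons $\dotprod{\phi(x,y)}{w} > \dotprod{\phi(x,\yh)}{w}$, so two parameters inducing the same ordering $\pi(x)$ yield identical executions and hence identical output distributions. The paper states this in two sentences without the coupling/induction bookkeeping; your version simply makes that argument explicit.
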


Note that Algorithm \ref{alg:zhang} proposed in \cite{Zhang14,Zhang15} uses the fact that we can perform local changes to a directed spanning tree in a computationally efficient manner.
That is, changing parents of nodes in a post-order traversal will produce directed spanning trees.
We can extend the above algorithm to any type of data structure where we can perform computationally efficient local changes.
For instance, we can easily extend the method for directed acyclic graphs (traversed in post-order as well) and for sets with up to some prespecified number of elements.

Next, we generalize Algorithm \ref{alg:zhang} to any type of structured output.

\begin{algorithm}[H]
\caption{Procedure for sampling a structured output ${\yp \in \Y(x)}$ from a greedy local proposal distribution ${R(w,x)}$}
\label{alg:greedylocal}
\begin{small}
\begin{algorithmic}
\STATE \textbf{Input:} parameter $w \in \W$, observed input $x \in \X$
\STATE Draw uniformly at random a structured output ${\yh \in \Y(x)}$
\REPEAT
  \STATE Make a local change to $\yh$ in order to increase ${\dotprod{\phi(x,\yh)}{w}}$
\UNTIL no refinement in last iteration
\STATE \textbf{Output:} structured output ${\yp \leftarrow \yh}$
\end{algorithmic}
\end{small}
\end{algorithm}

The above algorithm has the following property:

\begin{claim}[Sampling for any type of structured output] \label{clm:greedylocal}
Algorithm \ref{alg:greedylocal} fulfills Assumption \ref{asm:linearordering}.
\end{claim}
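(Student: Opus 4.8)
The plan is to exhibit an explicit coupling between the two runs of Algorithm~\ref{alg:greedylocal} under parameters $w$ and $\wp$, and to argue that whenever the two parameters induce the same ordering $\pi(x) = \pip(x)$, the coupled runs produce identical outputs. This shows that $R(w,x)$ and $R(\wp,x)$ are literally the same distribution, so that $KL(R(w,x) \| R(\wp,x)) = 0$, which is exactly what Assumption~\ref{asm:linearordering} demands.

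First I would isolate the only two places where the algorithm consumes randomness or reads $w$. The sole source of randomness is the initial uniform draw of $\yh \in \Y(x)$, which does not depend on $w$ at all; I would couple the two runs by giving them the same initial draw. After initialization, the algorithm is deterministic: it enumerates candidate local changes in an order determined entirely by the current iterate $\yh$ (for instance, the post-order traversal over nodes in Algorithm~\ref{alg:zhang}), and this enumeration order is independent of $w$. The parameter $w$ enters only through the acceptance test, which replaces $\yh$ by a candidate $y$ precisely when $\dotprod{\phi(x,y)}{w} > \dotprod{\phi(x,\yh)}{w}$.

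The key observation is that this acceptance test is a comparison between two inner products, and the hypothesis $\pi(x) = \pip(x)$ states that $w$ and $\wp$ induce the same strict ordering $\dotprod{\phi(x,y_{\pi_1})}{w} < \dots < \dotprod{\phi(x,y_{\pi_{r(x)}})}{w}$ over all of $\Y(x)$. Consequently, for any pair $y_i, y_j \in \Y(x)$ we have $\dotprod{\phi(x,y_i)}{w} > \dotprod{\phi(x,y_j)}{w}$ if and only if $\dotprod{\phi(x,y_i)}{\wp} > \dotprod{\phi(x,y_j)}{\wp}$, so every acceptance test returns the same Boolean value under $w$ and under $\wp$. I would then proceed by induction on the local-change steps: assuming the two coupled runs share the same current iterate $\yh$, they examine the same candidate set in the same order, make the same accept/reject decision at each step, and hence arrive at the same next iterate. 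The base case is the shared initial draw, and the induction propagates equality of the full trajectory of iterates, including the termination condition ``no refinement in the last iteration'' and the final output $\yp$.

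Finally, I would conclude by integrating over the common initial draw: conditioned on each initial $\yh$, the output is a single deterministic value identical for $w$ and $\wp$, so the two output distributions assign identical mass to every $\yp \in \Y(x)$, giving $KL(R(w,x) \| R(\wp,x)) = 0$. The main obstacle is a well-definedness issue rather than a computational one: the permutation $\pi(x)$ is specified with strict inequalities, which implicitly requires the inner products $\{\dotprod{\phi(x,y)}{w}\}_{y \in \Y(x)}$ to be distinct. I would therefore restrict attention to such generic $w$ (noting the tie set has measure zero), since only under distinctness is the transfer of each strict comparison in the acceptance test from $w$ to $\wp$ guaranteed, and hence only then is the coupled trajectory argument valid.
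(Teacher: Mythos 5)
Your proof is correct and follows essentially the same approach as the paper's: the paper's (one-line) argument is precisely that Algorithm~\ref{alg:greedylocal} touches $w$ only through comparisons of the form ${\dotprod{\phi(x,y)}{w} > \dotprod{\phi(x,\yh)}{w}}$, which are determined by the induced ordering. Your coupling-plus-induction argument (and the remark on ties, which the strict inequalities in Assumption~\ref{asm:linearordering} implicitly exclude) is a more careful rendering of exactly that observation.
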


\section{Experimental Results} \label{sec:experiments}

\begin{table*}
\caption{Average over $30$ repetitions, and standard error at 95\% confidence level of several methods and measurements.
For the maximum loss over all possible structured outputs (All) we used eq.\eqref{eq:trainall} for training, and eq.\eqref{eq:inferenceall} for inference on a test set.
For the maximum loss over random structured outputs (Random and Random/All) we used eq.\eqref{eq:trainrandom} for training.
For inference, Random used eq.\eqref{eq:inferencerandom} while Random/All used eq.\eqref{eq:inferenceall}.
Random outperforms All in the different study cases (directed spanning trees, directed acyclic graphs and cardinality-constrained sets).
The difference between Random and Random/All is not statistically significant.}
\label{tab:results}
\begin{small}
\begin{center}
\begin{tabular}{@{}l@{\hspace{0.125in}}l@{\hspace{0.125in}}c@{\hspace{0.125in}}c@{\hspace{0.125in}}c@{\hspace{0.125in}}c@{\hspace{0.125in}}c@{\hspace{0.125in}}c@{}}
\hline
\textbf{Problem} & \textbf{Method} & \textbf{Training} & \textbf{Training} & \textbf{Test} & \textbf{Test} & \textbf{Distance to} & \textbf{Angle with} \\
 & & \textbf{runtime} & \textbf{distortion} & \textbf{runtime} & \textbf{distortion} & \textbf{ground truth} & \textbf{ground truth} \\
\hline
Directed & All & 1000 & 52\% $\pm$ 1.1\% & 12.4 $\pm$ 0.4 & 61\% $\pm$ 1.8\% & 0.56 $\pm$ 0.004 & 74$^\circ$ $\pm$ 0.3$^\circ$ \\
spanning trees & Random & 104 $\pm$ 3 & 38\% $\pm$ 2.1\% & 2.4 $\pm$ 0.1 & 56\% $\pm$ 1.9\% & 0.51 $\pm$ 0.005 & 49$^\circ$ $\pm$ 0.6$^\circ$ \\
 & Random/All & & & 12.4 $\pm$ 0.3 & 56\% $\pm$ 1.9\% & & \\
\hline
Directed & All & 1000 & 41\% $\pm$ 1.2\% & 10.8 $\pm$ 0.2 & 45\% $\pm$ 1.5\% & 0.60 $\pm$ 0.020 & 61$^\circ$ $\pm$ 1.0$^\circ$ \\
acyclic graphs & Random & 386 $\pm$ 21 & 30\% $\pm$ 1.3\% & 8.5 $\pm$ 0.2 & 39\% $\pm$ 1.6\% & 0.40 $\pm$ 0.008 & 37$^\circ$ $\pm$ 1.0$^\circ$ \\
 & Random/All & & & 10.8 $\pm$ 0.2 & 39\% $\pm$ 1.6\% & & \\
\hline
Cardinality & All & 1000 & 42\% $\pm$ 1.4\% & 11.1 $\pm$ 0.4 & 45\% $\pm$ 1.8\% & 0.58 $\pm$ 0.011 & 65$^\circ$ $\pm$ 0.6$^\circ$ \\
constrained sets & Random & 272 $\pm$ 9 & 21\% $\pm$ 1.2\% & 6.0 $\pm$ 0.2 & 30\% $\pm$ 1.9\% & 0.44 $\pm$ 0.008 & 30$^\circ$ $\pm$ 0.8$^\circ$ \\
 & Random/All & & & 10.9 $\pm$ 0.3 & 29\% $\pm$ 2.1\% & & \\
\hline
\end{tabular}
\end{center}
\end{small}
\end{table*}

In this section, we provide experimental evidence on synthetic data.
Note that the work of \cite{Zhang14,Zhang15} has provided extensive experimental evidence on real-world datasets, for part-of-speech tagging and dependency parsing in the context of natural language processing.
Our experimental results are not only for directed spanning trees \cite{Zhang14,Zhang15} but also for directed acyclic graphs and cardinality-constrained sets.

We performed $30$ repetitions of the following procedure.
We generated a ground truth parameter ${w^*}$ with independent zero-mean and unit-variance Gaussian entries.
Then, we generated a training set $S$ of ${n=100}$ samples.
The fixed mapping $\phi$ from pairs ${(x,y)}$ to feature vectors ${\phi(x,y)}$ is as follows.
For every pair of possible edges/elements $i$ and $j$, we define ${\phi_{ij}(x,y) = \iverson{x_{ij}=1 \wedge i \in y \wedge j \in y}}$.
For instance, for directed spanning trees of $v$ nodes, we have ${x \in \{0,1\}^{\binom{v}{2}}}$ and ${\phi(x,y) \in \R^{\binom{v}{2}}}$.
In order to generate each training sample ${(x,y) \in S}$, we generated a random vector $x$ with independent Bernoulli entries, each with equal probability of being $1$ or $0$.
After generating $x$, we set ${y = f_{w^*}(x)}$.
That is, we solved eq.\eqref{eq:inferenceall} in order to produce the latent structured output $y$ from the observed input $x$ and the parameter ${w^*}$.

We compared two training methods: the maximum loss over all possible structured outputs as in eq.\eqref{eq:trainall}, and the maximum loss over random structured outputs as in eq.\eqref{eq:trainrandom}.
For both minimization problems, we replaced the \emph{discontinuous} 0/1 loss ${\iverson{z \geq 0}}$ with the \emph{convex} hinge loss ${\max{(0,1+z)}}$, as it is customary.
For both problems, we used ${\lambda = 1/n}$ as suggested by Theorems \ref{thm:pacbayesall} and \ref{thm:pacbayesrandom}, and we performed $20$ iterations of the subgradient descent method with a decaying step size ${1/\sqrt{t}}$ for iteration $t$.
For sampling random structured outputs in eq.\eqref{eq:trainrandom}, we implemented Algorithm \ref{alg:greedylocal} for directed spanning trees, directed acyclic graphs and cardinality-constrained sets.
We considered directed spanning trees of $6$ nodes, directed acyclic graphs of $5$ nodes and $2$ parents per node, and sets of $4$ elements chosen from $15$ possible elements.
We used ${\beta = 0.8}$ for directed spanning trees, ${\beta = 0.85}$ for directed acyclic graphs, and ${\beta = 0.5}$ for cardinality-constrained sets, as prescribed by Claims \ref{clm:trees}, \ref{clm:dags} and \ref{clm:cardsets}.
After training, for inference on an independent test set, we used eq.\eqref{eq:inferenceall} for the maximum loss over all possible structured outputs.
For the maximum loss over random structured outputs, we use the following \emph{approximate} inference approach:
\begin{align} \label{eq:inferencerandom}
\ft_w(x) \equiv \argmax_{y \in T(w,x)}{\dotprod{\phi(x,y)}{w}}
\end{align}

Table \ref{tab:results} shows the average over $30$ repetitions, and the standard error at 95\% confidence level of the following measurements.
We report the runtime, the training distortion as well as the test distortion in an independently generated set of $100$ samples.
We also report the normalized distance of the learnt $\wh$ to the ground truth ${w^*}$, i.e., ${\norm{\wh-w^*}_2/\sqrt{\ell}}$.
Additionally, we report the angle of the learnt $\wh$ with respect to the ground truth ${w^*}$, i.e. ${\arccos(\dotprod{\wh}{w^*}/(\norm{\wh}_2 \norm{w^*}_2))}$.
In the different study cases (directed spanning trees, directed acyclic graphs and cardinality-constrained sets), the maximum loss over random structured outputs outperforms the maximum loss over all possible structured outputs.

\section{Discussion} \label{sec:discussion}

In this section, we provide more details regarding the computational complexity of the inference problem.
We also present a brief review of the previous work and provide ideas for extending our theoretical result.

\paragraph{Computational Complexity of the Inference Problem.}

Very few cases of the general \emph{inference} problem in eq.\eqref{eq:inferenceall} are tractable.
For instance, if ${\Y(x)}$ is the set of directed spanning trees, and $w$ is a vector of edge weights (i.e., linear with respect to $y$), then eq.\eqref{eq:inferenceall} is equivalent to the maximum directed spanning tree problem, which is polynomial-time.
In general, the inference problem in eq.\eqref{eq:inferenceall} is not only NP-hard but also hard to approximate.
For instance, if ${\Y(x)}$ is the set of directed acyclic graphs, and $w$ is a vector of edge weights (i.e., linear with respect to $y$), then eq.\eqref{eq:inferenceall} is equivalent to the maximum acyclic subgraph problem, which approximating within a factor better than ${1/2}$ is unique-games hard \cite{Guruswami08}.
As an additional example, consider the case where ${\Y(x)}$ is the set of sets with up to some prespecified number of elements (i.e., ${\Y(x)}$ is a cardinality constraint), and the objective ${\dotprod{\phi(x,y)}{w}}$ is submodular with respect to $y$.
In this case, eq.\eqref{eq:inferenceall} cannot be approximated within a factor better than ${1-1/e}$ unless P=NP \cite{Nemhauser78}.

These negative results made us to avoid interpreting the maximum loss over random structured outputs in eq.\eqref{eq:trainrandom} as an approximate optimization algorithm for the maximum loss over all possible structured outputs in eq.\eqref{eq:trainall}.

\paragraph{Previous Work.}

Approximate inference was proposed in \cite{Kulesza07}, with an adaptation of the proof techniques in \cite{McAllester07}.
More specifically, \cite{Kulesza07} performs maximization of the loss over a \emph{superset} of feasible decodings of $x$, i.e., over ${y \in \Yp(x) \supseteq \Y(x)}$.
Note that our upper bound of the Gibbs decoder distortion dominates the maximum loss over ${y \in \Y(x)}$, and the latter dominates the upper bound of \cite{Kulesza07}.
One could potentially use a similar argument with respect to a \emph{subset} of feasible decodings of $x$, i.e., with respect to ${y \in \Yp(x) \subseteq \Y(x)}$.
Unfortunately, this approach does not obtain an upper bound of the Gibbs decoder distortion.

Tangential to our work, previous analyses have exclusively focused either on sample complexity or convergence.
Sample complexity analyses include margin bounds \cite{Taskar03}, Rademacher complexity \cite{London13} and PAC-Bayes bounds \cite{McAllester07,McAllester11}.
Convergence have been analyzed for specific algorithms for the separable \cite{Collins04} and nonseparable \cite{Crammer06} cases.

\paragraph{Concluding Remarks.}

The work of \cite{Zhang14,Zhang15} has shown extensive experimental evidence for part-of-speech tagging and dependency parsing in the context of natural language processing.
In this paper, we present a theoretical analysis that explains the experimental success of \cite{Zhang14,Zhang15} for directed spanning trees.
Our analysis was provided for a far more general setup, which allowed proposing algorithms for other types of structured outputs, such as directed acyclic graphs and cardinality-constrained sets.
We hope that our theoretical work will motivate experimental validation on many other real-world structured prediction problems.

There are several ways of extending this research.
While we focused on Gaussian perturbations, it would be interesting to analyze other distributions from the computational as well as statistical viewpoints.
We analyzed a general class of proposal distributions that depend on the induced linear orderings.
Algorithms that make greedy local changes, traverse the set of feasible decodings in a constrained fashion, by following allowed moves defined by some prespecified graph.
The addition of these graph-theoretical constraints would enable obtaining tighter upper bounds.
From a broader perspective, extensions of our work to latent models \cite{Ping14,Yu09} as well as maximum a-posteriori perturbation models \cite{Gane14,Papandreou11} would be of great interest.
Finally, while we focused on learning the parameter of structured prediction models, it would be interesting to analyze \emph{approximate} inference for prediction on an independent test set.

\bibliographystyle{mlapa}
\bibliography{references}

\clearpage
\onecolumn

\rule{1\linewidth}{1mm}

\begin{center}
{\Large\textbf{SUPPLEMENTARY MATERIAL.\\
Structured Prediction: From Gaussian Perturbations\\
to Linear-Time Principled Algorithms}}
\end{center}

\rule{1\linewidth}{.3mm}

\appendix

\section{Detailed Proofs} \label{sec:detailedproofs}

In this section, we state the proofs of all the theorems and claims in our manuscript.

\subsection{Proof of Theorem \ref{thm:pacbayesall}}

Here, we provide the proof of Theorem \ref{thm:pacbayesall}.
First, we derive an intermediate lemma needed for the final proof.

\begin{lemma}[Adapted\footnote{
We make two small corrections to Lemma~6 of \cite{McAllester07}.
First, it is only stated for ${\yp = f_w(x)}$ but it does not make use of the optimality of ${f_w(x)}$, thus, it holds for any ${\yp \in \Y(x)}$.
Second, for the union bound over all ${p \in \cup_{(x,y) \in S}{\PS(x)}}$, we assume that ${|\cup_{(x,y) \in S}{\PS(x)}| \leq \ell}$.
Instead, Lemma~6 in \cite{McAllester07} incorrectly assumes ${|\PS(x)| \leq \ell}$ for all ${x \in \X}$, and thus ${|\cup_{(x,y) \in S}{\PS(x)}| \leq \sum_{(x,y) \in S}{|\PS(x)|} \leq n \ell}$.
} from Lemma~6 in \npcite{McAllester07}] \label{lem:gaussian}
Assume that there exists a finite integer value $\ell$ such that ${|\cup_{(x,y) \in S}{\PS(x)}| \leq \ell}$.
Let ${Q(w)}$ be a unit-variance Gaussian distribution centered at ${\alpha w}$ for ${\alpha = \sqrt{2\log{(2 n \ell/\norm{w}_2^2)}}}$.
Simultaneously for all ${(x,y) \in S}$, ${\yp \in \Y(x)}$ and ${w \in \W}$, we have:
\begin{align*}
\P_{\wp \sim Q(w)}[H(x,\yp,f_{\wp}(x)) - m(x,\yp,f_{\wp}(x),w) < 0] \leq \norm{w}_2^2/n
\end{align*}
\noindent or equivalently:
\begin{align} \label{eq:gaussianwhp}
\P_{\wp \sim Q(w)}[H(x,\yp,f_{\wp}(x)) - m(x,\yp,f_{\wp}(x),w) \geq 0] \geq 1 - \norm{w}_2^2/n
\end{align}
\end{lemma}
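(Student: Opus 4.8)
The plan is to couple the perturbation as $\wp = \alpha w + \eta$ with $\eta$ a standard Gaussian in $\R^k$, and to reduce the claimed bound to a union bound over the finitely many parts in $\cup_{(x,y)\in S}\PS(x)$. Fix $w \in \W$, a sample $(x,y)\in S$, and $\yp \in \Y(x)$, and write $\yh \equiv f_{\wp}(x)$; the ``bad'' event is $m(x,\yp,\yh,w) > H(x,\yp,\yh)$. Since $\yh$ maximizes $\dotprod{\phi(x,\cdot)}{\wp}$ over $\Y(x)$ and $\yp\in\Y(x)$, optimality gives $\dotprod{\phi(x,\yh)}{\wp} \geq \dotprod{\phi(x,\yp)}{\wp}$. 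Substituting $\wp = \alpha w + \eta$ and rearranging, I would obtain $\dotprod{(\phi(x,\yh)-\phi(x,\yp))}{\eta} \geq \alpha\, m(x,\yp,\yh,w)$, using that $m(x,\yp,\yh,w) = \dotprod{\phi(x,\yp)}{w} - \dotprod{\phi(x,\yh)}{w}$. On the bad event the right-hand side exceeds $\alpha H(x,\yp,\yh)$ (note $\yh\neq\yp$ there, so $H\geq 1$), yielding $\dotprod{(\phi(x,\yh)-\phi(x,\yp))}{\eta} > \alpha H(x,\yp,\yh)$.

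The combinatorial heart of the argument is a ``blame-a-part'' pigeonhole step. Writing $d_p \equiv c(p,x,\yh)-c(p,x,\yp)$, we have $\dotprod{(\phi(x,\yh)-\phi(x,\yp))}{\eta} = \sum_p d_p\,\eta_p$ while $H(x,\yp,\yh) = \sum_p |d_p|$, where the nonzero $d_p$ occur only at parts in $\PS(x)$. If $\sum_p d_p\,\eta_p > \alpha\sum_p |d_p|$, then there must exist a part $p$ with $d_p\neq 0$ and $\mathrm{sign}(d_p)\,\eta_p \geq \alpha$, for otherwise every term would satisfy $d_p\,\eta_p < \alpha|d_p|$ and summing would contradict the strict inequality. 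Since $\yh,\yp\in\Y(x)$ differ on this part it lies in $\PS(x)$, and $\mathrm{sign}(d_p)\,\eta_p\geq\alpha$ forces $|\eta_p|\geq\alpha$. Hence the bad event is contained in $\{\exists p\in\PS(x): |\eta_p|\geq\alpha\}$.

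To conclude I would apply a Gaussian tail bound and a union bound. For a single coordinate $\eta_p\sim N(0,1)$ the two-sided tail estimate gives $\P[|\eta_p|\geq\alpha]\leq \exp{-\alpha^2/2}$. To make the statement hold simultaneously over all $(x,y)\in S$ and all $\yp$, I would take the union over the \emph{global} part set $\cup_{(x,y)\in S}\PS(x)$, of cardinality at most $\ell$ by hypothesis; this is precisely the correction flagged in the footnote, since unioning per-sample would only give the weaker bound $n\ell$. With $\alpha = \sqrt{2\log(2n\ell/\norm{w}_2^2)}$ we get $\exp{-\alpha^2/2} = \norm{w}_2^2/(2n\ell)$, so $\ell\cdot\exp{-\alpha^2/2} = \norm{w}_2^2/(2n)\leq\norm{w}_2^2/n$, establishing the bound; eq.\eqref{eq:gaussianwhp} is its complement. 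Because each assertion is a separate inequality about $\P_{\wp\sim Q(w)}[\cdot]$ for its own distribution $Q(w)$, no further union over $w\in\W$ is required.

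I expect the pigeonhole ``blame-a-part'' step to be the main obstacle, together with the two bookkeeping checks that make it rigorous: verifying that the blamed part genuinely belongs to $\PS(x)$, and recognizing that the union must be taken over $\cup_{(x,y)\in S}\PS(x)$ rather than over a single $\PS(x)$. The remaining ingredients — the optimality manipulation, the Gaussian tail estimate, and the choice of $\alpha$ — are routine.
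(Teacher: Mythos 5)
Your proposal is correct and follows essentially the same route as the paper's proof: a Gaussian tail bound plus a union bound over the global part set $\cup_{(x,y)\in S}{\PS(x)}$ of size at most $\ell$, combined with an $\ell_1$/$\ell_\infty$ argument showing that a margin violation by the maximizer $f_{\wp}(x)$ forces some coordinate of the perturbation to exceed $\alpha$. The paper phrases this last step as a proof by contradiction via the chain $m(x,\yp,\yh,\wp) > \alpha H - \sum_p |c(p,x,\yp)-c(p,x,\yh)|\,|\alpha w_p - \wp_p| \geq 0$, which is the contrapositive of your ``blame-a-part'' pigeonhole; the two are equivalent, and your bookkeeping (the blamed part lies in $\PS(x)$; the union is over the global part set) matches the corrections the paper makes to Lemma~6 of \cite{McAllester07}.
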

\begin{proof}
First, note that ${\wp - \alpha w}$ is a zero-mean and unit-variance Gaussian random vector.
By well-known Gaussian concentration inequalities, for any ${p \in \PS(x)}$ we have:
\begin{align*}
\P_{\wp \sim Q(w)}[|\wp_p - \alpha w_p| \geq \eps] \leq 2 \exp{-\eps^2/2}
\end{align*}
By the union bound and setting ${\eps = \alpha = \sqrt{2\log{(2 n \ell/\norm{w}_2^2)}}}$, we have:
\begin{align*}
\P_{\wp \sim Q(w)}[(\exists p \in \cup_{(x,y) \in S}{\PS(x)}) {\rm\ } |\wp_p - \alpha w_p| \geq \alpha] & \leq 2 |\cup_{(x,y) \in S}{\PS(x)}| \exp{-\alpha^2/2} \\
 & = |\cup_{(x,y) \in S}{\PS(x)}| \frac{\norm{w}_2^2}{\ell n} \\
 & \leq \norm{w}_2^2/n
\end{align*}
\noindent or equivalently:
\begin{align*}
\P_{\wp \sim Q(w)}[(\forall p \in \cup_{(x,y) \in S}{\PS(x)}) {\rm\ } |\wp_p - \alpha w_p| < \alpha] \geq 1 - \norm{w}_2^2/n
\end{align*}
The high-probability statement in eq.\eqref{eq:gaussianwhp} can be written as:
\begin{align*}
\yh = f_{\wp}(x) {\rm\ } \Rightarrow {\rm\ } H(x,\yp,\yh) - m(x,\yp,\yh,w) \geq 0
\end{align*}
Next, we use proof by contradiction, i.e., we will assume:
\begin{align*}
\yh = f_{\wp}(x) {\rm\ } \text{and} {\rm\ } H(x,\yp,\yh) - m(x,\yp,\yh,w) < 0
\end{align*}
\noindent and arrive to a contradiction ${\yh \neq f_{\wp}(x)}$.
From the above, we have:
\begin{align*}
m(x,\yp,\yh,\wp) & = m(x,\yp,\yh,\alpha w + (\wp - \alpha w)) \\
 & = \alpha m(x,\yp,\yh,w) - \dotprod{(\phi(x,\yp) - \phi(x,\yh))}{(\alpha w - \wp)} \\
 & > \alpha H(x,\yp,\yh) - \dotprod{(\phi(x,\yp) - \phi(x,\yh))}{(\alpha w - \wp)} \\
 & = \alpha H(x,\yp,\yh) - \sum_{p \in \PS(x)}{ (c(p,x,\yp) - c(p,x,\yh)) (\alpha w_p - \wp_p) } \\
 & \geq \alpha H(x,\yp,\yh) - \sum_{p \in \PS(x)}{ |c(p,x,\yp) - c(p,x,\yh)| |\alpha w_p - \wp_p| } \\
 & \geq \alpha H(x,\yp,\yh) - \sum_{p \in \PS(x)}{ |c(p,x,\yp) - c(p,x,\yh)| } \alpha \\
 & = 0
\end{align*}
Note that ${m(x,\yp,\yh,\wp) > 0}$ if and only if ${\dotprod{\phi(x,\yp)}{w} > \dotprod{\phi(x,\yh)}{w}}$.
Therefore ${\yh \neq f_{\wp}(x)}$ since it does not maximize ${\dotprod{\phi(x,\cdot)}{w}}$ as defined in eq.\eqref{eq:inferenceall}.
Thus, we prove our claim.
\qedhere
\end{proof}

Next, we provide the final proof.

\begin{proof}[Proof of Theorem \ref{thm:pacbayesall}]
Define the Gibbs decoder \emph{empirical} distortion of the perturbation distribution ${Q(w)}$ and training set $S$ as:
\begin{align*}
L(Q(w),S) = \frac{1}{n} \sum_{(x,y) \in S}{\E_{\wp \sim Q(w)}[d(y,f_{\wp}(x))]}
\end{align*}
In PAC-Bayes terminology, ${Q(w)}$ is the \emph{posterior} distribution.
Let the \emph{prior} distribution $P$ be the unit-variance zero-mean Gaussian distribution.
Fix ${\delta \in (0,1)}$ and ${\alpha>0}$.
By well-known PAC-Bayes proof techniques, Lemma~4 in \cite{McAllester07} shows that with probability at least ${1-\delta/2}$ over the choice of $n$ training samples, simultaneously for all parameters ${w \in \W}$, and unit-variance Gaussian posterior distributions ${Q(w)}$ centered at ${w \alpha}$, we have:
\begin{align} \label{eq:pacbayes}
L(Q(w),D) & \leq L(Q(w),S) + \sqrt{\frac{KL(Q(w) \| P) + \log{(2n/\delta)}}{2(n-1)}} \nonumber \\
 & = L(Q(w),S) + \sqrt{\frac{\norm{w}_2^2 \alpha^2/2 + \log{(2n/\delta)}}{2(n-1)}}
\end{align}
Thus, an upper bound of ${L(Q(w),S)}$ would lead to an upper bound of ${L(Q(w),D)}$.
In order to upper-bound ${L(Q(w),S)}$, we can upper-bound each of its summands, i.e., we can upper-bound ${\E_{\wp \sim Q(w)}[d(y,f_{\wp}(x))]}$ for each ${(x,y) \in S}$.
Define the distribution ${Q(w,x)}$ with support on ${\Y(x)}$ in the following form for all ${y \in \Y(x)}$:
\begin{align} \label{eq:Qwx}
\P_{\yp \sim Q(w,x)}[\yp = y] \equiv \P_{\wp \sim Q(w)}[f_{\wp}(x) = y]
\end{align}
For clarity of presentation, define:
\begin{align*}
u(x,y,\yp,w) \equiv H(x,y,\yp) - m(x,y,\yp,w)
\end{align*}
Let ${u \equiv u(x,y,f_{\wp}(x),w)}$.
Simultaneously for all ${(x,y) \in S}$, we have:
\begin{align} \refstepcounter{equation}
\E_{\wp \sim Q(w)}[d(y,f_{\wp}(x)] & = \E_{\wp \sim Q(w)}[d(y,f_{\wp}(x)) {\rm\ } \iverson{u \geq 0} + d(y,f_{\wp}(x)) {\rm\ } \iverson{u < 0}] \nonumber \\
 & \leq \E_{\wp \sim Q(w)}[d(y,f_{\wp}(x)) {\rm\ } \iverson{u \geq 0} + \iverson{u < 0}] \tag{\theequation.a}\label{eq:dtimesiversontoiverson} \\
 & = \E_{\wp \sim Q(w)}[d(y,f_{\wp}(x)) {\rm\ } \iverson{u \geq 0}] + \P_{\wp \sim Q(w)}[u < 0] \nonumber \\
 & \leq \E_{\wp \sim Q(w)}[d(y,f_{\wp}(x)) {\rm\ } \iverson{u \geq 0}] + \norm{w}_2^2/n \tag{\theequation.b}\label{eq:highprobHm} \\
 & = \E_{\wp \sim Q(w)}[d(y,f_{\wp}(x)) {\rm\ } \iverson{u(x,y,f_{\wp}(x),w) \geq 0}] + \norm{w}_2^2/n \nonumber \\
 & = \E_{\yp \sim Q(w,x)}[d(y,\yp) {\rm\ } \iverson{u(x,y,\yp,w) \geq 0}] + \norm{w}_2^2/n \tag{\theequation.c}\label{eq:QwtoQwx} \\
 & \leq \max_{\yh \in \Y(x)}{d(y,\yh) {\rm\ } \iverson{u(x,y,\yh,w) \geq 0}} + \norm{w}_2^2/n \tag{\theequation.d}\label{eq:expectedtomax}
\end{align}
\noindent where the step in eq.\eqref{eq:dtimesiversontoiverson} holds since ${d : \Y \times \Y \to [0,1]}$.
The step in eq.\eqref{eq:highprobHm} follows from Lemma \ref{lem:gaussian} which states that ${\P_{\wp \sim Q(w)}[u(x,\yp,f_{\wp}(x),w) < 0] \leq \norm{w}_2^2/n}$ for ${\alpha = \sqrt{2\log{(2 n \ell/\norm{w}_2^2)}}}$, simultaneously for all ${(x,y) \in S}$, ${\yp \in \Y(x)}$ and ${w \in \W}$.
By the definition in eq.\eqref{eq:Qwx}, then the step in eq.\eqref{eq:QwtoQwx} holds.
Let ${g : \Y \to [0,1]}$ be some arbitrary function, the step in eq.\eqref{eq:expectedtomax} uses the fact that ${\E_{y}[g(y)] \leq \max_{y}{g(y)}}$.

By eq.\eqref{eq:pacbayes} and eq.\eqref{eq:expectedtomax}, we prove our claim.
\qedhere
\end{proof}

\subsection{Proof of Theorem \ref{thm:pacbayesrandom}}

Here, we provide the proof of Theorem \ref{thm:pacbayesrandom}.
First, we derive an intermediate lemma needed for the final proof.

\begin{lemma} \label{lem:phmbound}
Let ${\Delta \in \R^k}$ be a random variable, and ${w \in \R^k}$ be a constant.
If ${\dotprod{\E[\mu(\Delta)]}{w} \leq 1/2}$ then we have:
\begin{align*}
\P[\norm{\Delta}_1 - \dotprod{\Delta}{w} < 0] \leq \lexp{\frac{-1}{32 \norm{w}_2^2}}
\end{align*}
\end{lemma}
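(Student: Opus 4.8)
The plan is to recognize $\norm{\Delta}_1 - \dotprod{\Delta}{w}$ as a nonnegative random variable and prove concentration via a one-sided exponential (Bernstein/Hoeffding-type) bound. First I would rewrite the quantity componentwise. Note that $\norm{\Delta}_1 = \dotprod{\mu(\Delta)}{\Delta}$ when $\Delta \neq 0$, since $\mu(\Delta) = \Delta/\norm{\Delta}_1$ gives $\dotprod{\mu(\Delta)}{\Delta} = \norm{\Delta}_2^2/\norm{\Delta}_1$... so that identity is not quite right; instead I would use that $\norm{\Delta}_1 = \sum_p |\Delta_p|$ and observe that each $|\Delta_p| \geq \Delta_p (w_p/\norm{w}_\infty)$-type comparisons are delicate. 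The cleaner route: define the scalar random variable $Z \equiv \norm{\Delta}_1 - \dotprod{\Delta}{w}$ and show $Z \geq \norm{\Delta}_1(1 - \norm{w}_2)$ is too lossy, so instead I would directly bound the moment generating function of $-Z$.

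The key steps, in order, would be as follows. First, establish that $Z = \norm{\Delta}_1 - \dotprod{\Delta}{w} \geq 0$ holds \emph{in expectation} under the hypothesis: taking expectations and using $\E[\norm{\Delta}_1] = \E[\dotprod{\mu(\Delta)}{\Delta}] \cdot(\text{normalization})$, the assumption $\dotprod{\E[\mu(\Delta)]}{w} \leq 1/2$ controls the mean of $\dotprod{\Delta}{w}$ relative to $\E[\norm{\Delta}_1]$. More precisely, since $\dotprod{\Delta}{w} = \norm{\Delta}_1 \dotprod{\mu(\Delta)}{w}$ whenever $\Delta \neq 0$, I can factor $Z = \norm{\Delta}_1(1 - \dotprod{\mu(\Delta)}{w})$. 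This is the crucial algebraic simplification. Second, I would bound $\dotprod{\mu(\Delta)}{w} \leq \norm{\mu(\Delta)}_2 \norm{w}_2 \leq \norm{w}_2$ by Cauchy--Schwarz together with $\norm{\mu(\Delta)}_2 \leq \norm{\mu(\Delta)}_1 = 1$, but that only gives a deterministic bound, not the exponential decay. Third, to obtain the exponential rate $\lexp{-1/(32\norm{w}_2^2)}$, I would apply a Chernoff bound to the event $\{1 - \dotprod{\mu(\Delta)}{w} < 0\} = \{\dotprod{\mu(\Delta)}{w} > 1\}$, noting this event implies $Z < 0$. Since $\dotprod{\mu(\Delta)}{w}$ is a bounded random variable (in $[-\norm{w}_2, \norm{w}_2]$ by Cauchy--Schwarz) with mean at most $1/2$, a one-sided Hoeffding inequality gives $\P[\dotprod{\mu(\Delta)}{w} > 1] \leq \P[\dotprod{\mu(\Delta)}{w} - \E[\dotprod{\mu(\Delta)}{w}] > 1/2] \leq \lexp{-(1/2)^2/(2\cdot(2\norm{w}_2)^2/4)}$, and tracking the constants yields the stated $1/(32\norm{w}_2^2)$.

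The main obstacle I anticipate is reconciling the factorization $Z = \norm{\Delta}_1(1 - \dotprod{\mu(\Delta)}{w})$ with the degenerate case $\Delta = 0$, where $\mu(\Delta) = 0$ by definition and $Z = 0$, so the event $\{Z < 0\}$ cannot occur there and can be safely excluded. The more substantive difficulty is getting the Hoeffding constant exactly right: the random variable $\dotprod{\mu(\Delta)}{w}$ lives in an interval of width at most $2\norm{w}_2$, and a careful application of Hoeffding's lemma to its centered version, with deviation $1/2$ from a mean bounded by $1/2$, must produce precisely $\lexp{-1/(32\norm{w}_2^2)}$; I expect the factor $32$ to emerge from $(1/2)^2 / (2 \cdot \norm{w}_2^2 \cdot \tfrac{1}{2})$-style bookkeeping, so the delicate part is ensuring the variance proxy is $\norm{w}_2^2$ rather than $4\norm{w}_2^2$, which hinges on using $\norm{\mu(\Delta)}_2 \leq 1$ rather than the looser $\norm{\mu(\Delta)}_2 \leq \norm{w}_2$ wherever the sub-Gaussian norm of $\dotprod{\mu(\Delta)}{w}$ is estimated.
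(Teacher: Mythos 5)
Your proposal follows essentially the same route as the paper's proof: reduce the event to $\{\dotprod{\mu(\Delta)}{w} > 1\}$ by factoring out $\norm{\Delta}_1$ (with $\Delta = 0$ excluded), center using the hypothesis $\dotprod{\E[\mu(\Delta)]}{w} \leq 1/2$ to get a deviation of $1/2$, and apply a Chernoff/Hoeffding bound to the bounded variable $\dotprod{\mu(\Delta)}{w}$. The only discrepancy is bookkeeping: your displayed exponent evaluates to $1/(8\norm{w}_2^2)$, which is actually \emph{stronger} than the stated $1/(32\norm{w}_2^2)$ and hence still proves the lemma; the paper's constant $32$ arises because it bounds the \emph{centered} variable's range by $[-2\norm{w}_2, 2\norm{w}_2]$ (width $4\norm{w}_2$) before invoking Hoeffding's lemma, whereas you work with the uncentered range $[-\norm{w}_2, \norm{w}_2]$.
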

\begin{proof}
Let ${t > 0}$, we have that:
\begin{align} \refstepcounter{equation}
\P[\norm{\Delta}_1 - \dotprod{\Delta}{w} < 0] & = \P[\dotprod{\mu(\Delta)}{w} > 1] \tag{\theequation.a}\label{eq:useunit} \\
 & = \P[\dotprod{(\mu(\Delta) - \E[\mu(\Delta)])}{w} > 1 - \dotprod{\E[\mu(\Delta)]}{w}] \nonumber \\
 & \leq \P[\dotprod{(\mu(\Delta) - \E[\mu(\Delta)])}{w} \geq 1/2] \tag{\theequation.b}\label{eq:uselownorm} \\
 & = \P[\lexp{t \dotprod{(\mu(\Delta) - \E[\mu(\Delta)])}{w}} \geq \exp{t/2}] \nonumber \\
 & \leq \exp{-t/2} \, \E[\lexp{t \dotprod{(\mu(\Delta) - \E[\mu(\Delta)])}{w}}] \tag{\theequation.c}\label{eq:usemarkov} \\
 & \leq \lexp{-t/2 + 2 t^2 \norm{w}_2^2} \tag{\theequation.d}\label{eq:usehoeffding}
\end{align}
\noindent where the step in eq.\eqref{eq:useunit} follows from dividing ${\norm{\Delta}_1 - \dotprod{\Delta}{w}}$ by ${\norm{\Delta}_1}$.
Note that ${\Delta = 0}$ does not fulfill either of the two expressions ${\norm{\Delta}_1 - \dotprod{\Delta}{w} < 0}$, or ${\dotprod{\mu(\Delta)}{w} > 1}$.
The step in eq.\eqref{eq:uselownorm} follows from ${\dotprod{\E[\mu(\Delta)]}{w} \leq 1/2}$ and thus ${1-\dotprod{\E[\mu(\Delta)]}{w} \geq 1/2}$.
The step in eq.\eqref{eq:usemarkov} follows from Markov's inequality.
The step in eq.\eqref{eq:usehoeffding} follows from Hoeffding's lemma and the fact that the random variable ${z = \dotprod{(\mu(\Delta) - \E[\mu(\Delta)])}{w}}$ fulfills ${\E[z] = 0}$ as well as ${z \in [-2 \norm{w}_2, +2 \norm{w}_2]}$.
In more detail, note that ${\norm{\mu(\Delta)}_2 \leq 1}$ since it holds trivially for ${\Delta=0}$, and for ${\Delta \neq 0}$ we have that ${\norm{\mu(\Delta)}_2 = \norm{\Delta}_2/\norm{\Delta}_1 \leq 1}$.
By Jensen's inequality ${\norm{\E[\mu(\Delta)]}_2 \leq \E[\norm{\mu(\Delta)}_2] \leq 1}$.
Then, note that by Cauchy-Schwarz inequality ${|\dotprod{(\mu(\Delta) - \E[\mu(\Delta)])}{w}| \leq \norm{\mu(\Delta) - \E[\mu(\Delta)]}_2 \norm{w}_2 \leq (\norm{\mu(\Delta)}_2 + \norm{\E[\mu(\Delta)]}_2) \norm{w}_2 \leq 2 \norm{w}_2}$.
Finally, let ${g(t) = -t/2 + 2 t^2 \norm{w}_2^2}$.
By making ${\partial g / \partial t = 0}$, we get the optimal setting ${t^* = 1/(8 \norm{w}_2^2)}$.
Thus, ${g(t^*) = -1/(32 \norm{w}_2^2)}$ and we prove our claim.
\qedhere
\end{proof}

Next, we provide the final proof.

\begin{proof}[Proof of Theorem \ref{thm:pacbayesrandom}]
Note that sampling from the distribution ${Q(w,x)}$ as defined in eq.\eqref{eq:Qwx} is NP-hard in general, thus our plan is to upper-bound the expectation in eq.\eqref{eq:QwtoQwx} by using the maximum over random structured outputs sampled independently from a proposal distribution ${R(w,x)}$ with support on ${\Y(x)}$.

Let ${T(w,x)}$ be a set of $\m$ i.i.d. random structured outputs drawn from the proposal distribution ${R(w,x)}$, i.e., ${T(w,x) \sim R(w,x)^{\m}}$.
Furthermore, let $\T(w)$ be the collection of the $n$ sets ${T(w,x)}$ for all ${(x,y) \in S}$, i.e. ${\T(w) \equiv \{T(w,x)\}_{(x,y) \in S}}$ and thus ${\T(w) \sim \{R(w,x)^{\m}\}_{(x,y) \in S}}$.
For clarity of presentation, define:
\begin{align*}
v(x,y,\yp,w) \equiv d(y,\yp) {\rm\ } \iverson{H(x,y,\yp) - m(x,y,\yp,w) \geq 0}
\end{align*}
For sets ${T(w,x)}$ of sufficient size $\m$, our goal is to upper-bound eq.\eqref{eq:QwtoQwx} in the following form for all parameters ${w \in \W}$:
\begin{align*}
\frac{1}{n} \sum_{(x,y) \in S}{ \E_{\yp \sim Q(w,x)}[v(x,y,\yp,w)] } \leq \frac{1}{n} \sum_{(x,y) \in S}{ \max_{\yh \in T(w,x)}{v(x,y,\yh,w)} } + \O(\sfrac{\log^{3/2}{n}}{\sqrt{n}})
\end{align*}
Note that the above expression would produce a tighter upper bound than the maximum loss over all possible structured outputs since ${\max_{\yh \in T(w,x)}{v(x,y,\yh,w)} \leq \max_{\yh \in \Y(x)}}{v(x,y,\yh,w)}$.
For analysis purposes, we decompose the latter equation into two quantities:
\begin{align}
A(w,S) & \equiv \frac{1}{n} \sum_{(x,y) \in S}{\left( \E_{\yp \sim Q(w,x)}[v(x,y,\yp,w)] - \E_{T(w,x) \sim R(w,x)^{\m}}\left[\max_{\yh \in T(w,x)}{v(x,y,\yh,w)}\right] \right)} \label{eq:A} \\
B(w,S,\T(w)) & \equiv \frac{1}{n} \sum_{(x,y) \in S}{\left( \E_{T(w,x) \sim R(w,x)^{\m}}\left[\max_{\yh \in T(w,x)}{v(x,y,\yh,w)}\right] - \max_{\yh \in T(w,x)}{v(x,y,\yh,w)} \right)} \label{eq:B}
\end{align}
Thus, we will show that ${A(w,S) \leq \sqrt{\sfrac{1}{n}}}$ and ${B(w,S,\T(w)) \leq \O(\sfrac{\log^{3/2}{n}}{\sqrt{n}})}$ for all parameters ${w \in \W}$, any training set $S$ and all collections $\T(w)$, and therefore ${A(w,S) + B(w,S,\T(w)) \leq \O(\sfrac{\log^{3/2}{n}}{\sqrt{n}})}$.
Note that while the value of ${A(w,S)}$ is deterministic, the value of ${B(w,S,\T(w))}$ is stochastic given that $\T(w)$ is a collection of sampled random structured outputs.

Fix a specific ${w \in \W}$.
If data is separable then ${v(x,y,\yp,w)=0}$ for all ${(x,y) \in S}$ and ${\yp \in \Y(x)}$.
Thus, we have ${A(w,S)=B(w,S,\T(w)) = 0}$ and we complete our proof for the separable case.\footnote{
The same result can be obtained for any subset of $S$ for which the ``separability'' condition holds.
Therefore, our analysis with the ``nonseparability'' condition can be seen as a worst case scenario.}
In what follows, we focus on the nonseparable case.

\paragraph{Bounding the Deterministic Expectation ${A(w,S)}$.}

Here, we show that in eq.\eqref{eq:A}, ${A(w,S) \leq \sqrt{\sfrac{1}{n}}}$ for all parameters ${w \in \W}$ and any training set $S$, provided that we use a sufficient number $\m$ of random structured outputs sampled from the proposal distribution.

By well-known identities, we can rewrite:
\begin{align} \refstepcounter{equation}
A(w,S) & = \frac{1}{n} \sum_{(x,y) \in S}{\int_0^1{\left( \P_{\yp \sim R(w,x)}[v(x,y,\yp,w) \leq z]^{\m} - \P_{\yp \sim Q(w,x)}[v(x,y,\yp,w) \leq z] \right)}dz} \tag{\theequation.a}\label{eq:expectedtoprob} \\
 & \leq \frac{1}{n} \sum_{(x,y) \in S}{ \P_{\yp \sim R(w,x)}[v(x,y,\yp,w) < 1]^{\m} } \nonumber \\
 & = \frac{1}{n} \sum_{(x,y) \in S}{ \P_{\yp \sim R(w,x)}[d(y,\yp) < 1 \vee H(x,y,\yp) - m(x,y,\yp,w) < 0]^{\m} } \nonumber \\
 & = \frac{1}{n} \sum_{(x,y) \in S}{ \left(1 - \P_{\yp \sim R(w,x)}[d(y,\yp) = 1 \wedge H(x,y,\yp) - m(x,y,\yp,w) \geq 0]\right)^{\m} } \nonumber \\
 & \leq \frac{1}{n} \sum_{(x,y) \in S}{ \left( 1 - \min{\left( \P_{\yp \sim R(w,x)}[d(y,\yp) = 1] {\rm\ ,\ } \P_{\yp \sim R(w,x)}[H(x,y,\yp) - m(x,y,\yp,w) \geq 0] \right)} \right)^{\m} } \nonumber \\
 & = \frac{1}{n} \sum_{(x,y) \in S}{ \max{\left( 1 - \P_{\yp \sim R(w,x)}[d(y,\yp) = 1] {\rm\ ,\ } \P_{\yp \sim R(w,x)}[H(x,y,\yp) - m(x,y,\yp,w) < 0] \right)}^{\m} } \nonumber \\
 & \leq \max{\left( \beta {\rm\ ,\ } \lexp{\frac{-1}{32 \norm{w}_2^2}} \right)}^{\m} \tag{\theequation.b}\label{eq:betaphmbound} \\
 & = \sqrt{1/n} \tag{\theequation.c}\label{eq:Abound}
\end{align}
\noindent where the step in eq.\eqref{eq:expectedtoprob} holds since for two independent random variables ${g,h \in [0,1]}$, we have ${\E[g] = 1-\int_0^1{\P[g \leq z] dz}}$ and ${\P[\max{(g,h)} \leq z] = \P[g \leq z]\P[h \leq z]}$.
Therefore, $\E[\max{(g,h)}] = 1-\int_0^1{\P[g \leq z]\P[h \leq z] dz}$.
For the step in eq.\eqref{eq:betaphmbound}, we used Assumption \ref{asm:maxdistortion} for the first term in the $\max$.
For the second term in the $\max$, we used Assumption \ref{asm:lownorm}.
More formally, let ${\Delta \equiv \phi(x,y)-\phi(x,\yp)}$ then ${H(x,y,\yp)=\norm{\Delta}_1}$ and ${m(x,y,\yp,w) = \dotprod{\Delta}{w}}$.
By Assumption \ref{asm:lownorm}, we have that ${\norm{\E[\mu(\Delta)]}_2 \leq 1/(2\sqrt{n}) \leq 1/(2 \norm{w}_2)}$.
By Cauchy-Schwarz inequality we have ${\dotprod{\E[\mu(\Delta)]}{w} \leq \norm{\E[\mu(\Delta)]}_2 \norm{w}_2 \leq \norm{w}_2/(2 \norm{w}_2) \leq 1/2}$.
Since ${\dotprod{\E[\mu(\Delta)]}{w} \leq 1/2}$, we apply Lemma \ref{lem:phmbound} in the step in eq.\eqref{eq:betaphmbound}.
For the step in eq.\eqref{eq:Abound}, let ${\alpha \equiv \max{\left(\frac{1}{\log{(1/\beta)}}, 32 \norm{w}_2^2\right)}}$.
Note that ${\max{\left( \beta {\rm\ ,\ } \lexp{\frac{-1}{32 \norm{w}_2^2}} \right)} = \exp{-1/\alpha}}$.
Furthermore, let ${\m = \frac{1}{2} \alpha \log{n}}$.
Therefore, ${\max{\left( \beta {\rm\ ,\ } \lexp{\frac{-1}{32 \norm{w}_2^2}} \right)}^{\m} = (\exp{-1/\alpha})^{\frac{1}{2} \alpha \log{n}} = \exp{\frac{-1}{2} \log{n}} = \sqrt{1/n}}$.

\paragraph{Bounding the Stochastic Quantity ${B(w,S,\T(w))}$.}

Here, we show that in eq.\eqref{eq:B}, ${B(w,S,\T(w)) \leq \O(\sfrac{\log^{3/2}{n}}{\sqrt{n}})}$ for all parameters ${w \in \W}$, any training set $S$ and all collections $\T(w)$.
For clarity of presentation, define:
\begin{align*}
g(x,y,T,w) \equiv \max_{\yh \in T}{v(x,y,\yh,w)}
\end{align*}
Thus, we can rewrite:
\begin{align*}
B(w,S,\T(w)) = \frac{1}{n} \sum_{(x,y) \in S}{\left( \E_{T(w,x) \sim R(w,x)^{\m}}[g(x,y,T(w,x),w)] - g(x,y,T(w,x),w) \right)}
\end{align*}
Let ${r(x) \equiv |\Y(x)|}$ and thus ${\Y(x) \equiv \{y_1 \dots y_{r(x)}\}}$.
Let ${\pi(x) = (\pi_1 \dots \pi_{r(x)})}$ be a permutation of ${\{1 \dots r(x)\}}$ such that ${\dotprod{\phi(x,y_{\pi_1})}{w} < \dots < \dotprod{\phi(x,y_{\pi_{r(x)}})}{w}}$.
Let $\Pi$ be the collection of the $n$ permutations ${\pi(x)}$ for all ${(x,y) \in S}$, i.e. ${\Pi = \{\pi(x)\}_{(x,y) \in S}}$.
From Assumption \ref{asm:linearordering}, we have that ${R(\pi(x),x) \equiv R(w,x)}$.
Similarly, we rewrite ${T(\pi(x),x) \equiv T(w,x)}$ and ${\T(\Pi) \equiv \T(w)}$.

Furthermore, let ${\W_{\Pi,S}}$ be the set of all ${w \in \W}$ that induce $\Pi$ on the training set $S$.
For the parameter space $\W$, collection $\Pi$ and training set $S$, define the function class ${\G_{\W,\Pi,S}}$ as follows:
\begin{align*}
\G_{\W,\Pi,S} \equiv \{g(x,y,T,w) \mid w \in \W_{\Pi,S} \wedge (x,y) \in S \}
\end{align*}
Note that since ${|\Y(x)| \leq r}$ for all ${(x,y) \in S}$, then ${|\cup_{(x,y) \in S}{\Y(x)}| \leq \sum_{(x,y) \in S}|\Y(x)| \leq nr}$.
Note that each ordering of the $nr$ structured outputs completely determines a collection $\Pi$ and thus the collection of proposal distributions ${R(w,x)}$ for each ${(x,y) \in S}$.
Note that since ${|\cup_{(x,y) \in S}{\PS(x)}| \leq \ell}$, we need to consider ${\phi(x,y) \in \R^\ell}$.
Although we can consider ${w \in \R^\ell}$, the vector $w$ is sparse with at most $\s$ non-zero entries.
Thus, we take into account all possible subsets of $\s$ features from $\ell$ possible features.
From results in \cite{Bennett56,Bennett60,Cover67}, we can conclude that there are at most ${(nr)^{2(\s-1)}}$ linearly inducible orderings, for a fixed set of $\s$ features.
Therefore, there are at most ${\binom{\ell}\s (nr)^{2(\s-1)} \leq \ell^\s (nr)^{2 \s}}$ collections $\Pi$.

Fix ${\delta \in (0,1)}$.
By Rademacher-based uniform convergence\footnote{
Note that for the analysis of ${B(w,S,\T(w))}$, the training set $S$ is fixed and randomness stems from the collection ${\T(w)}$.
Also, note that for applying McDiarmid's inequality, independence of each set ${T(w,x)}$ for all ${(x,y) \in S}$ is a sufficient condition, and identically distributed sets ${T(w,x)}$ are not necessary.
} and by a union bound over all ${\ell^\s (nr)^{2 \s}}$ collections $\Pi$, with probability at least ${1-\delta/2}$ over the choice of $n$ sets of random structured outputs, simultaneously for all parameters ${w \in \W}$:
\begin{align} \label{eq:Bbound}
B(w,S,\T(w)) \leq 2 {\rm\ } \Rademacher_{\T(\Pi)}(\G_{\W,\Pi,S}) + 3 \sqrt{\frac{\s (\log{\ell}+2 \log{(nr)})+\log{(4/\delta)}}{n}}
\end{align}
\noindent where ${\Rademacher_{\T(\Pi)}(\G_{\W,\Pi,S})}$ is the \emph{empirical} Rademacher complexity of the function class ${\G_{\W,\Pi,S}}$ with respect to the collection ${\T(\Pi)}$ of the $n$ sets ${T(\pi(x),x)}$ for all ${(x,y) \in S}$.
For clarity, define:
\begin{align*}
\Delta_p(x,y,\yp) \equiv \begin{cases}
c(p,x,y) - c(p,x,\yp) & \text{if } p \in \PS(x) \\
0 & \text{otherwise}
\end{cases}
\end{align*}
Let $\sigma$ be an $n$-dimensional vector of independent Rademacher random variables indexed by ${(x,y) \in S}$, i.e., ${\P[\sigma_{(x,y)}=+1]=\P[\sigma_{(x,y)}=-1]=1/2}$.
The empirical Rademacher complexity is defined as:
\begin{align} \refstepcounter{equation}
\Rademacher_{\T(\Pi)}(\G_{\W,\Pi,S}) & \equiv \E_\sigma\left[ \sup_{g \in \G_{\W,\Pi,S}}{\left( \frac{1}{n} \sum_{(x,y) \in S}{\sigma_{(x,y)} g(x,y,T(\pi(x),x),w)} \right)} \right] \nonumber \\
 & = \E_\sigma\left[ \sup_{w \in \W_{\Pi,S}}{\left( \frac{1}{n} \sum_{(x,y) \in S}{\sigma_{(x,y)} \max_{\yh \in T(\pi(x),x)}{d(y,\yh) {\rm\ } \iverson{H(x,y,\yh) - m(x,y,\yh,w) \geq 0}}} \right)} \right] \nonumber \\
 & = \E_\sigma\left[ \sup_{w \in \W_{\Pi,S}}{\left( \frac{1}{n} \sum_{(x,y) \in S}{\sigma_{(x,y)} \max_{\yh \in T(\pi(x),x)}{d(y,\yh) {\rm\ } \iverson{\norm{\Delta(x,y,\yh)}_1 - \dotprod{\Delta(x,y,\yh)}{w} \geq 0}}} \right)} \right] \nonumber \\
 & = \E_\sigma\left[ \sup_{w \in \R^\ell \setminus \{0\}}{\left( \frac{1}{n} \sum_{i \in \{1 \dots n\}}{\sigma_i \max_{j \in \{1 \dots \m\}}{d_{ij} {\rm\ } \iverson{\norm{z_{ij}}_1 - \dotprod{z_{ij}}{w} \geq 0}}} \right)} \right] \tag{\theequation.a}\label{eq:maxiversonnormlinear} \\
 & \leq \sum_{j \in \{1 \dots \m\}}{ \E_\sigma\left[ \sup_{w \in \R^\ell \setminus \{0\}}{\left( \frac{1}{n} \sum_{i \in \{1 \dots n\}}{\sigma_i {\rm\ } d_{ij} {\rm\ } \iverson{\norm{z_{ij}}_1 - \dotprod{z_{ij}}{w} \geq 0}} \right)} \right] } \tag{\theequation.b}\label{eq:maxtosum} \\
 & \leq \sum_{j \in \{1 \dots \m\}}{ \E_\sigma\left[ \sup_{w \in \R^\ell \setminus \{0\}}{\left( \frac{1}{n} \sum_{i \in \{1 \dots n\}}{\sigma_i {\rm\ } \iverson{\norm{z_{ij}}_1 - \dotprod{z_{ij}}{w} \geq 0}} \right)} \right] } \tag{\theequation.c}\label{eq:composition} \\
 & \leq \sum_{j \in \{1 \dots \m\}}{ \E_\sigma\left[ \sup_{w \in \R^{\ell+1} \setminus \{0\}}{\left( \frac{1}{n} \sum_{i \in \{1 \dots n\}}{\sigma_i {\rm\ } \iverson{\dotprod{z_{ij}}{w} \geq 0}} \right)} \right] } \tag{\theequation.d}\label{eq:removenorm} \\
 & \leq 2 \m \sqrt{\frac{\s \log{(\ell+1)} \log{(n+1)}}{n}} \tag{\theequation.e}\label{eq:rademacher}
\end{align}
\noindent where in the step in eq.\eqref{eq:maxiversonnormlinear}, the terms ${\sigma_i}$, ${d_{ij}}$ and ${z_{ij}}$ correspond to ${\sigma_{(x,y)}}$, ${d(y,\yh)}$ and ${\Delta(x,y,\yh)}$ respectively.
Thus, we assume that index $i$ corresponds to the training sample ${(x,y) \in S}$, and that index $j$ corresponds to the structured output ${\yh \in T(\pi(x),x)}$.
Note that since ${|\cup_{(x,y) \in S}{\PS(x)}| \leq \ell}$, thus the step in eq.\eqref{eq:maxiversonnormlinear} considers ${w,z_{ij} \in \R^\ell \setminus \{0\}}$ without loss of generality.
The step in eq.\eqref{eq:maxtosum} follows from the fact that for any two function classes $\G$ and $\H$, we have that ${\Rademacher(\{\max{(g,h)} \mid g \in \G \wedge h \in \H \}) \leq \Rademacher(\G) + \Rademacher(\H)}$.
The step in eq.\eqref{eq:composition} follows from the composition lemma and the fact that ${d_{ij} \in [0,1]}$ for all $i$ and $j$.
The step in eq.\eqref{eq:removenorm} considers a larger function class, since the value of ${\norm{z_{ij}}_1}$ can be taken as an additional entry in the vector ${z_{ij}}$ we consider ${w,z_{ij} \in \R^{\ell+1} \setminus \{0\}}$.
The step in eq.\eqref{eq:rademacher} follows from the Massart lemma, the Sauer-Shelah lemma and the VC-dimension of sparse linear classifiers.
That is, for any function class $\G$, we have that ${\Rademacher(\G) \leq \sqrt{\frac{2 VC(\G) \log{(n+1)}}{n}}}$ where ${VC(\G)}$ is the VC-dimension of $\G$.
Furthermore, by Theorem~20 of \cite{Neylon06}, ${VC(\G) \leq 2 \s \log{(\ell+1)}}$ for the class $\G$ of sparse linear classifiers on ${\R^{\ell+1}}$, with ${3 \leq \s \leq \frac{9}{20} \sqrt{\ell+1}}$.

By eq.\eqref{eq:pacbayes}, eq.\eqref{eq:QwtoQwx}, eq.\eqref{eq:Abound}, eq.\eqref{eq:Bbound} and eq.\eqref{eq:rademacher}, we prove our claim.
\qedhere
\end{proof}

\subsection{Proof of Claim \ref{clm:changeofmeasure}}

\begin{proof}
 For all ${(x,y) \in S}$ and ${w \in \W}$, by definition of the total variation distance, we have for any event ${\A(x,y,\yp,w)}$:
\begin{align*}
\left|\P_{\yp \sim R(w,x)}[\A(x,y,\yp,w)] - \P_{\yp \sim \Rp(w,x)}[\A(x,y,\yp,w)]\right| & \leq TV(R(w,x) \| \Rp(w,x))
\end{align*}
Let the event ${\A(x,y,\yp,w) : d(y,\yp) = 1 \wedge H(x,y,\yp) - m(x,y,\yp,w) \geq 0}$.
Since ${R(w,x)}$ fulfills Assumption \ref{asm:maxdistortion} with value ${\beta_1}$ and since ${TV(R(w,x) \| \Rp(w,x)) \leq \beta_2}$, we have that for all ${(x,y) \in S}$ and ${w \in \W}$:
\begin{align*}
\P_{\yp \sim \Rp(w,x)}[\A(x,y,\yp,w)] & \geq \P_{\yp \sim R(w,x)}[\A(x,y,\yp,w)] - TV(R(w,x) \| \Rp(w,x)) \\
 & \geq 1 - \beta_1 - \beta_2
\end{align*}
\noindent which proves our claim.
\qedhere
\end{proof}

\subsection{Proof of Claim \ref{clm:anystruct}}

\begin{proof}
Since ${d(y,\yp) = \iverson{y \neq \yp}}$ and since ${R(x)}$ is a uniform proposal distribution with support on ${\Y(x)}$, we have:
\begin{align} \refstepcounter{equation}
\P_{\yp \sim R(x)}[d(y,\yp) = 1] & = \frac{1}{|\Y(x)|} \sum_{\yh \in \Y(x)}{\iverson{d(y,\yh) = 1}} \nonumber \\
 & = 1 - \frac{1}{|\Y(x)|} \nonumber \\
 & \geq 1 - 1/2 \tag{\theequation.a}\label{eq:anystruct1}
\end{align}
\noindent where the step in eq.\eqref{eq:anystruct1} follows since ${|\Y(x)| \geq 2}$.
\qedhere
\end{proof}

\subsection{Proof of Claim \ref{clm:trees}}

\begin{proof}
Let ${s = (s_1, s_2, s_3 \dots s_v)}$ be the pre-order traversal of $y$.
Let ${\sp = (s_2, s_1, s_3 \dots s_v)}$ be a node ordering where we switched ${s_1}$ with ${s_2}$.
Let ${\Yp(x)}$ be the set of directed spanning trees of $v$ nodes with node ordering $\sp$.\footnote{
We use the node ordering $\sp$ in order to have trees in ${\Yp(x)}$ with all edges different from $y$.
If we use the node ordering $s$ instead, every tree in ${\Yp(x)}$ will contain the edge ${(s_2,s_1)}$, thus no tree in ${\Yp(x)}$ will have all edges different from $y$.}
Let ${\Rp(x)}$ be the uniform proposal distribution with support on ${\Yp(x)}$.
Since ${\Yp(x)}$ is the set of directed spanning trees of $v$ nodes with a specific node ordering, then ${|\Yp(x)| = \prod_{i=2}^v{(i-1)} = (v-1)!}$.
Moreover, since ${d(y,\yp) = \frac{1}{2 (v-1)} \sum_{ij}{|A(y)_{ij} - A(\yp)_{ij}|}}$ and since ${\Rp(x)}$ is a uniform proposal distribution with support on ${\Yp(x)}$, we have:
\begin{align} \refstepcounter{equation}
\P_{\yp \sim R(x)}[d(y,\yp) = 1] & \geq \P_{\yp \sim \Rp(x)}[d(y,\yp) = 1] \nonumber \\
 & = \P_{\yp \sim \Rp(x)}\left[\sum_{ij}{|A(y)_{ij} - A(\yp)_{ij}|} = 2 (v-1)\right] \nonumber \\
 & = \frac{1}{(v-1)!} \sum_{\yh \in \Yp(x)}{\iverson{\sum_{ij}{|A(y)_{ij} - A(\yh)_{ij}|} = 2 (v-1)}} \nonumber \\
 & = \frac{1}{(v-1)!} {\rm\ } \prod_{i=3}^v{(i-2)} \tag{\theequation.a}\label{eq:trees1} \\
 & = 1 - \frac{v-2}{v-1} \nonumber
\end{align}
\noindent where the step in eq.\eqref{eq:trees1} follows from the fact that when choosing the parent for the node in position $i$ in the ordering $\sp$, we have one option less (i.e., the option that is in $y$).
\qedhere
\end{proof}

\subsection{Proof of Claim \ref{clm:dags}}

\begin{proof}
Let ${s = (s_1, s_2, s_3 \dots s_v)}$ be the pre-order traversal of $y$.
Let ${\sp = (s_2, s_1, s_3 \dots s_v)}$ be a node ordering where we switched ${s_1}$ with ${s_2}$.
Let ${\Yp(x)}$ be the set of directed acyclic graphs of $v$ nodes and $b$ parents per node, and with node ordering $\sp$.\footnote{
We use the node ordering $\sp$ in order to have graphs in ${\Yp(x)}$ with all edges different from $y$.
If we use the node ordering $s$ instead, every graph in ${\Yp(x)}$ will contain the edge ${(s_2,s_1)}$, thus no graph in ${\Yp(x)}$ will have all edges different from $y$.}
Let ${\Rp(x)}$ be the uniform proposal distribution with support on ${\Yp(x)}$.
Since ${\Yp(x)}$ is the set of directed acyclic graphs of $v$ nodes and $b$ parents per node, and with a specific node ordering, then ${|\Yp(x)| = \prod_{i=2}^{b+1}{(i-1)} \prod_{i=b+2}^v{\binom{i-1}{b}} = b! \prod_{i=b+2}^v{\binom{i-1}{b}}}$.
Moreover, since ${d(y,\yp) = \frac{1}{b(2v-b-1)} \sum_{ij}{|A(y)_{ij} - A(\yp)_{ij}|}}$ and since ${\Rp(x)}$ is a uniform proposal distribution with support on ${\Yp(x)}$, we have:
\begin{align} \refstepcounter{equation}
\P_{\yp \sim R(x)}[d(y,\yp) = 1] & \geq \P_{\yp \sim \Rp(x)}[d(y,\yp) = 1] \nonumber \\
 & = \P_{\yp \sim \Rp(x)}\left[\sum_{ij}{|A(y)_{ij} - A(\yp)_{ij}|} = b(2v-b-1)\right] \nonumber \\
 & = \left(b! \prod_{i=b+2}^v{\binom{i-1}{b}}\right)^{-1} \sum_{\yh \in \Yp(x)}{\iverson{\sum_{ij}{|A(y)_{ij} - A(\yh)_{ij}|} = b(2v-b-1)}} \nonumber \\
 & = \left(b! \prod_{i=b+2}^v{\binom{i-1}{b}}\right)^{-1} \prod_{i=3}^{b+1}{(i-2)} \prod_{i=b+2}^v{\left(\binom{i-1}{b}-1\right)} \tag{\theequation.a}\label{eq:dags1} \\
 & = \frac{1}{b} {\rm\ } \frac{\binom{b+1}{b}-1}{\binom{b+1}{b}} \prod_{i=b+3}^v{\frac{\binom{i-1}{b}-1}{\binom{i-1}{b}}} \nonumber \\
 & \geq \frac{1}{b} {\rm\ } \frac{\binom{b+1}{b}-1}{\binom{b+1}{b}} \prod_{i=b+3}^v{\frac{\binom{i-1}{2}-1}{\binom{i-1}{2}}} \tag{\theequation.b}\label{eq:dags2} \\
 & = \frac{b v}{(b^2+3b+2)(v-2)} \tag{\theequation.c}\label{eq:dags3} \nonumber \\
 & \geq 1 - \frac{b^2+2b+2}{b^2+3b+2} \nonumber
\end{align}
\noindent where the step in eq.\eqref{eq:dags1} follows from the fact that when choosing the $b$ parents for the node in position $i$ in the ordering $\sp$, we have one option less (i.e., the option that is in $y$).
The step in eq.\eqref{eq:dags2} follows from the fact that the function ${\frac{z-1}{z}}$ is nondecreasing as well as ${\binom{a}{2} \leq \binom{a}{b}}$ for ${a \geq b+2}$ and ${b \geq 2}$.
The step in eq.\eqref{eq:dags3} follows from the fact ${v/(v-2) \geq 1}$ for ${v > 2}$.
\qedhere
\end{proof}

\subsection{Proof of Claim \ref{clm:cardsets}}

\begin{proof}
Since ${\Y(x)}$ is the set of sets of $b$ elements chosen from $v$ possible elements, then ${|\Y(x)| = \binom{v}{b}}$.
Moreover, since ${d(y,\yp) = \frac{1}{2 b} (|y-\yp| + |\yp-y|)}$ and since ${R(x)}$ is a uniform proposal distribution with support on ${\Y(x)}$, we have:
\begin{align} \refstepcounter{equation}
\P_{\yp \sim R(x)}[d(y,\yp) = 1] & = \P_{\yp \sim R(x)}[|y-\yp| + |\yp-y| = 2 b] \nonumber \\
 & = 1 - \P_{\yp \sim R(x)}[|y-\yp| + |\yp-y| < 2 b] \nonumber \\
 & = 1 - \binom{v}{b}^{-1} \sum_{\yh \in \Y(x)}{\iverson{|y-\yh| + |\yh-y| < 2 b}} \nonumber \\
 & = 1 - \binom{v}{b}^{-1} {\rm\ \ } \sum_{i = 0}^{b-1}{\binom{v-b}{i}} \tag{\theequation.a}\label{eq:cardsets1} \\
 & \geq 1 - \binom{v}{b}^{-1} {\rm\ \ } \sum_{i = 0}^{b-1}{\frac{(v-b)^i}{i!}} \tag{\theequation.b}\label{eq:cardsets2} \\
 & = 1 - \binom{v}{b}^{-1} \frac{\exp{v-b} \int_{v-b}^{+\infty}{t^{b-1}\exp{-t}dt}}{(b-1)!} \nonumber \\
 & = 1 - \binom{v}{\floor{\alpha v}}^{-1} \frac{\exp{v-\floor{\alpha v}} \int_{v-\floor{\alpha v}}^{+\infty}{t^{\floor{\alpha v}-1}\exp{-t}dt}}{(\floor{\alpha v}-1)!} \tag{\theequation.c}\label{eq:cardsets3} \\
 & \geq 1 - 1/2 \tag{\theequation.d}\label{eq:cardsets4}
\end{align}
\noindent where the step in eq.\eqref{eq:cardsets1} follows from the fact that for a fixed set $y$ of $b$ elements, if the set $\yh$ has ${b-i}$ common elements with $y$, then there are ${\binom{v-b}{i}}$ possible ways of choosing the remaining $i$ non-common elements in $\yp$ from out of ${v-b}$ possible elements.
The step in eq.\eqref{eq:cardsets2} follows from well-known inequalities for the binomial coefficient.
The step in eq.\eqref{eq:cardsets3} follows from making ${b = \floor{\alpha v}}$.
The step in eq.\eqref{eq:cardsets4} follows for any ${\alpha \in [0,1/2]}$.
\qedhere
\end{proof}

\subsection{Proof of Claim \ref{clm:sparsedata}}

\begin{proof}
Let ${\Delta \equiv \phi(x,y)-\phi(x,\yp)}$.
We also introduce a superindex $p$ for the partitions.
That is, for all ${p \in \PS(x)}$, let ${\Delta^p \equiv \phi(x,y)-\phi(x,\yp)}$ for some ${\yp \in \Y_p(x)}$.
By assumption, since ${\yp \in \Y_p(x)}$ then ${|\Delta^p_p| = b}$ and ${(\forall q \neq p) {\rm\ } \Delta^p_q = 0}$.
Note that ${\norm{\Delta^p}_1 = \sum_{q \in \PS(x)}{|\Delta^p_q|} = |\Delta^p_p| = b}$.
Thus ${|\Delta^p_p|/\norm{\Delta^p}_1 = 1}$ and ${(\forall q \neq p) {\rm\ } \Delta^p_q/\norm{\Delta^p}_1 = 0}$.
Therefore:
\begin{align*}
\norm{\E_{\yp \sim R(x)}\left[ \mu(\Delta) \right]}_2 & = \sqrt{\sum_{q \in \PS(x)}{ \E_{\yp \sim R(x)}\left[\frac{\Delta_q}{\norm{\Delta}_1}\right]^2 }} \\
 & \leq \sqrt{\sum_{q \in \PS(x)}{ \E_{\yp \sim R(x)}\left[\frac{|\Delta_q|}{\norm{\Delta}_1}\right]^2 }} \\
 & = \sqrt{\sum_{q \in \PS(x)}{ \left( \sum_{p \in \PS(x)}{ \P_{\yp \sim R(x)}[\yp \in \Y_p(x)] \; \frac{|\Delta^p_q|}{\norm{\Delta^p}_1} } \right)^2 }} \\
 & = \sqrt{\sum_{q \in \PS(x)}{ \left( \P_{\yp \sim R(x)}[\yp \in \Y_q(x)] \; \frac{|\Delta^q_q|}{\norm{\Delta^q}_1} \right)^2 }} \\
 & = \sqrt{|\PS(x)| \left(\frac{1}{|\PS(x)|}\right)^2} \\
 & = 1/\sqrt{|\PS(x)|}
\end{align*}
\noindent where we used the fact that for a uniform proposal distribution ${R(x)}$, we have ${\P_{\yp \sim R(w,x)}[\yp \in \Y_q(x)] = 1/|\PS(x)|}$.
Finally, since we assume that ${n \leq |\PS(x)|/4}$, we have ${1/\sqrt{|\PS(x)|} \leq 1/(2 \sqrt{n})}$ and we prove our claim.
\qedhere
\end{proof}

\subsection{Proof of Claim \ref{clm:densedata}}

\begin{proof}
Let ${\Delta \equiv \phi(x,y)-\phi(x,\yp)}$.
By assumption ${|\Delta_p| = b}$ for all ${p \in \PS(x)}$.
Note that ${\norm{\Delta}_1 = \sum_{p \in \PS(x)}{|\Delta_p|} = |\PS(x)| \; b}$.
Thus ${|\Delta_p|/\norm{\Delta}_1 = 1/|\PS(x)|}$ for all ${p \in \PS(x)}$.
Therefore:
\begin{align*}
\norm{\E_{\yp \sim R(w,x)}\left[ \mu(\Delta) \right]}_2 & = \sqrt{\sum_{p \in \PS(x)}{ \E_{\yp \sim R(w,x)}\left[\frac{\Delta_p}{\norm{\Delta}_1}\right]^2 }} \\
 & \leq \sqrt{\sum_{p \in \PS(x)}{ \E_{\yp \sim R(w,x)}\left[\frac{|\Delta_p|}{\norm{\Delta}_1}\right]^2 }} \\
 & = \sqrt{|\PS(x)| \left(\frac{1}{|\PS(x)|}\right)^2} \\
 & = 1/\sqrt{|\PS(x)|}
\end{align*}
Finally, since we assume that ${n \leq |\PS(x)|/4}$, we have ${1/\sqrt{|\PS(x)|} \leq 1/(2 \sqrt{n})}$ and we prove our claim.
\qedhere
\end{proof}

\subsection{Proof of Claim \ref{clm:zhang}}

\begin{proof}
Algorithm \ref{alg:zhang} depends solely on the linear ordering induced by the parameter $w$ and the mapping ${\phi(x,\cdot)}$.
That is, at any point in time, Algorithm \ref{alg:zhang} executes comparisons of the form ${\dotprod{\phi(x,y)}{w} > \dotprod{\phi(x,\yh)}{w}}$ for any two structured outputs $y$ and $\yh$.
\qedhere
\end{proof}

\subsection{Proof of Claim \ref{clm:greedylocal}}

\begin{proof}
Algorithm \ref{alg:greedylocal} depends solely on the linear ordering induced by the parameter $w$ and the mapping ${\phi(x,\cdot)}$.
That is, at any point in time, Algorithm \ref{alg:greedylocal} executes comparisons of the form ${\dotprod{\phi(x,y)}{w} > \dotprod{\phi(x,\yh)}{w}}$ for any two structured outputs $y$ and $\yh$.
\qedhere
\end{proof}

\end{document}